\documentclass[11pt]{article}
\usepackage[preprint]{acl}
\usepackage[T1]{fontenc}
\usepackage[utf8]{inputenc}
\usepackage{times}
\usepackage{latexsym}
\usepackage{microtype}

\usepackage{url}
\usepackage{booktabs}
\usepackage{amsfonts}
\usepackage{amsmath}
\usepackage{amssymb}
\usepackage{amsthm}
\usepackage{multirow}
\usepackage{xcolor}
\usepackage{mathtools}
\usepackage{enumitem}
\usepackage{graphicx}
\usepackage{tikz}
\usetikzlibrary{positioning, arrows.meta, shapes.geometric, fit, backgrounds, calc}
\theoremstyle{plain}
\newtheorem{theorem}{Theorem}
\newtheorem{proposition}[theorem]{Proposition}

\newtheorem{corollary}[theorem]{Corollary}

\theoremstyle{definition}
\newtheorem{definition}[theorem]{Definition}
\newtheorem{assumption}[theorem]{Assumption}
\newtheorem{remark}[theorem]{Remark}

\newcommand{\R}{\mathbb{R}}

\newcommand{\PP}{\mathbb{P}}
\newcommand{\D}{\mathcal{D}}

\newcommand{\1}{\mathbf{1}}

\newcommand{\argmax}{\operatorname*{arg\,max}}
\newcommand{\Bin}{\operatorname{Binomial}}
\newcommand{\VaR}{\operatorname{VaR}}
\newcommand{\CVaR}{\operatorname{CVaR}}
\newcommand{\clip}{\operatorname{clip}}

\title{StepCOPS: Closed-Testing Lower-Tail Certificates\\
for Language-Model Policy Selection}

\author{
\textbf{Ibne Farabi Shihab}\thanks{Equal contribution.}\thanks{Corresponding author: \texttt{ishihab@iastate.edu}.}\textsuperscript{1}
\and
\textbf{Sanjeda Akter}\footnotemark[1]\textsuperscript{1}
\and
\textbf{Anuj Sharma}\textsuperscript{2}
\\[2pt]
\textsuperscript{1}Department of Computer Science, Iowa State University \\
\textsuperscript{2}Department of Civil, Construction \& Environmental Engineering, Iowa State University \\
\texttt{ishihab@iastate.edu}
}
\begin{document}
\maketitle

\begin{abstract}
Post-training pipelines must select one language-model policy from many
checkpoints, prompts, and decoding rules. Mean evaluator scores can conceal
rare failures, whereas simultaneous candidate-wise confidence bounds can be
unnecessarily conservative. We introduce \emph{StepCOPS}, which uses an
independent proposal split to nominate one lower-tail floor per candidate,
exact binomial tests on a fresh certification split, and Holm's step-down
procedure to certify a set of floors. With probability at least $1-\delta$,
every certified floor---and therefore the selected maximum---is below its
candidate's population lower $\alpha$-quantile, assuming i.i.d.\ evaluation
units but allowing arbitrary within-unit dependence across candidates. On
$24$ predeclared configurations and $11$ benchmarks, StepCOPS obtains
$96.4\%$ selected-policy coverage over $500$ paired trials, raises the
certified floor by $1.5$ points over both proposal-Bonferroni and exact COPS,
lies $0.6$ points below the large-reference jury oracle, and abstains in
$2.4\%$ of trials. Shadow-judge, benchmark-native, artifact, and
leave-one-judge-out audits characterize the proxy boundary: the guarantee
applies to the fixed jury score, not directly to human safety.
\end{abstract}

\section{Introduction}

Policy selection is a distinct stage of modern language-model pipelines: alignment produces RLHF or DPO checkpoints, reward-model variants, system prompts, and decoding configurations, and a team ultimately deploys one candidate. Selecting the largest estimated mean automatic-evaluator score can conceal a poor lower tail---strong on average, yet occasionally unsafe, toxic, or fabricated. The same conflict appears in offline reinforcement learning, retained here as a control study.

The statistical difficulty is not merely estimating a tail: the same calibration sample commonly compares candidates and chooses the winner, so a bound built for a fixed candidate need not remain valid after selection. A first-generation remedy gives every candidate a simultaneous lower confidence bound on a lower score quantile and selects the largest; this exact simultaneous order-statistic construction, \emph{COPS}, is retained as a baseline (Section~\ref{sec:method}). Protecting \emph{every} candidate is conservative, however, when the goal is only to certify and deploy the single winner: the price of simultaneity is paid on all $K$ candidates at once.

\textbf{StepCOPS.} We take a closed-testing view of the same problem. An \emph{independent} proposal sample proposes one candidate-specific floor $c_k$ per policy; a fresh certification sample then runs an exact lower-tail binomial test of $H_k:\PP(S_k<c_k)\ge\alpha$ for each $k$, and Holm's sequentially rejective step-down \citep{holm1979,goeman2010sequential} certifies a set $\mathcal R$ of floors at family-wise level $\delta$. StepCOPS deploys $\widehat k\in\argmax_{k\in\mathcal R}c_k$ with certificate $c_{\widehat k}$, and abstains when nothing is certified above a predeclared operating threshold. Because Holm's rejection set uniformly contains Bonferroni's on every dataset, StepCOPS uniformly dominates the matched-proposal Bonferroni certificate while inheriting the same $1-\delta$ family-wise guarantee and requiring \emph{no} assumption on how candidate scores co-vary within a prompt. It does not uniformly dominate exact COPS, because the two procedures certify different statistical objects; we make that boundary explicit.

The observable quantity is a proxy. A judge or reward-model score is not human preference, deployment return, or true harm, and a certificate for the proxy distribution is not automatically a certificate for any of those. We separate the two questions throughout: StepCOPS certifies the fixed primary-jury score distribution unconditionally, and any interpretation as return or true-safety control requires a separate transfer condition that we state explicitly and, in the control study, probe empirically (a two-action no-overlap construction shows a nontrivial return floor is impossible from logged data alone in general). Table~\ref{tab:guarantee_map} (appendix) summarizes the scope of each statement.

Our contributions: (i) StepCOPS, a closed-testing lower-tail selector with a simultaneous post-selection floor guarantee under arbitrary within-prompt dependence (Theorem~\ref{thm:stepcops}); (ii) a proof that it uniformly improves the matched-proposal Bonferroni certificate, with the precise sense in which it does \emph{not} dominate exact COPS; (iii) an NLP-first evaluation on a predeclared factorial pool of $24$ configurations, $11$ benchmarks in five domains, and a five-model jury with two held-out shadow judges; and (iv) shadow-judge, benchmark-native, and artifact audits that quantify the proxy boundary rather than assuming it away.

\section{Related Work}

Off-policy evaluation and selection typically target estimated mean return via importance weighting, doubly robust estimators, or fitted Q-evaluation \citep{precup2000eligibility,jiang2016doubly,thomas2016dr,le2019fqe,paine2020hyperparameter,fu2021benchmarks}; COPS instead certifies a lower quantile of a declared score distribution. The calibration machinery is classical---exact binomial order statistics and the tolerance-bound and risk-control constructions of Learn-Then-Test and risk-controlling prediction sets \citep{angelopoulos2022ltt,bates2021selective}---and Holm's step-down is a standard closed test \citep{holm1979,goeman2010sequential}; we claim none of these as new. The contributions are the selected-policy formulation, the pairing of an independent floor proposal with exact lower-tail certification, and the explicit calibration-versus-proxy-transfer boundary. The closest language-model work is conformal tail-risk control of response filters \citep{pmlr-v267-chen25bd} and uncertainty-based abstention \citep{wang-etal-2025-sconu}; COPS differs in selecting one member of a frozen policy pool with simultaneous candidate-wise floors. Conformal off-policy prediction for bandits \citep{taufiq2022conformal}, conformal training \citep{stutz2022conformal}, and risk-sensitive training such as CPQ \citep{xu2022cpq} modify estimation or training, whereas COPS is post hoc over any predeclared measurable score. Appendix~\ref{app:related} expands each of these boundaries.

\section{Problem Setup}

Let $\D=\{\tau_i\}_{i=1}^N$ be a logged dataset of trajectories generated by a behavior process, and let $\{\pi_1,\ldots,\pi_K\}$ be a finite set of candidate policies trained before calibration. For candidate $k$, let $R_k$ denote the deployment return under $\pi_k$, with distribution $H_k$ and lower quantile
\[
    Q_\alpha(\pi_k)
    =
    q_\alpha(H_k)
    =
    \inf\{x\in\R:H_k(x)\ge \alpha\}.
\]
The deployment-floor oracle is $k_R^\star\in\argmax_{k\in[K]} Q_\alpha(\pi_k)$. Because deployment returns are unavailable at selection time, COPS works with offline scores: for each candidate $k$, a fixed measurable score map $s_k:\mathcal{T}\to \R$ maps a calibration trajectory $\tau$ to a scalar score $S_k=s_k(\tau)$. Let $F_k$ be the population distribution of $S_k$ when $\tau$ is drawn from the calibration trajectory distribution, and define the score lower quantile $q_\alpha(F_k)=\inf\{x\in\R:F_k(x)\ge \alpha\}$ and its oracle $k_S^\star\in\argmax_{k\in[K]} q_\alpha(F_k)$.

The target of the distribution-free theory is $q_\alpha(F_k)$, not $Q_\alpha(\pi_k)$: the simultaneous bound on $q_\alpha(F_k)$ becomes a bound on $Q_\alpha(\pi_k)$ only under the transfer assumptions of Section~\ref{sec:transfer}.

\begin{definition}[Lower quantile and empirical order statistic]
For a distribution function $F$ on $\R$, define $q_\alpha(F)=\inf\{x:F(x)\ge \alpha\}$ for $\alpha\in(0,1)$. For samples $S_{1k},\ldots,S_{nk}$, let $S_{(1)k}\le \cdots \le S_{(n)k}$ be their order statistics, with $S_{(r)k}$ the $r$-th. If no valid lower confidence order statistic exists, the procedure returns a known lower support bound $L_0$ when available and $-\infty$ otherwise, never the unjustified empirical minimum.
\end{definition}

\section{Score Construction}
\label{sec:score}

The theory treats $s_k$ as any fixed measurable score map, and this abstraction is intentional: the finite-sample calibration result does not require the score to be unbiased, Gaussian, asymptotically normal, or even an OPE estimator for the mean; it only requires that calibration scores are i.i.d.\ conditional on all training choices. In the offline-RL control study, the score is a clipped trajectory-level doubly robust return score (Eq.~\eqref{eq:correct_dr_score}, Appendix~\ref{app:score-implementation}): ratios are clipped at $\rho_{\max}$ for stability, so the score-level theorem certifies the quantile of the \emph{clipped}-score distribution, and transfer back to deployment returns is handled separately in Section~\ref{sec:transfer}. The density and normalization choices needed to preserve a fixed per-unit score map are also in Appendix~\ref{app:score-implementation}.

\section{Conformal Off-Policy Selection}
\label{sec:method}

COPS splits data at the trajectory level. Training data are used to train candidate policies, choose hyperparameters, fit nuisance models, estimate behavior densities, choose clipping thresholds, and fix all score maps. Calibration data are used \emph{only} to evaluate the fixed score maps and compute the post-selection certificate. The resulting workflow is shown in Figure~\ref{fig:pipeline}.

\begin{figure}[t]
\centering
\includegraphics[width =\linewidth]{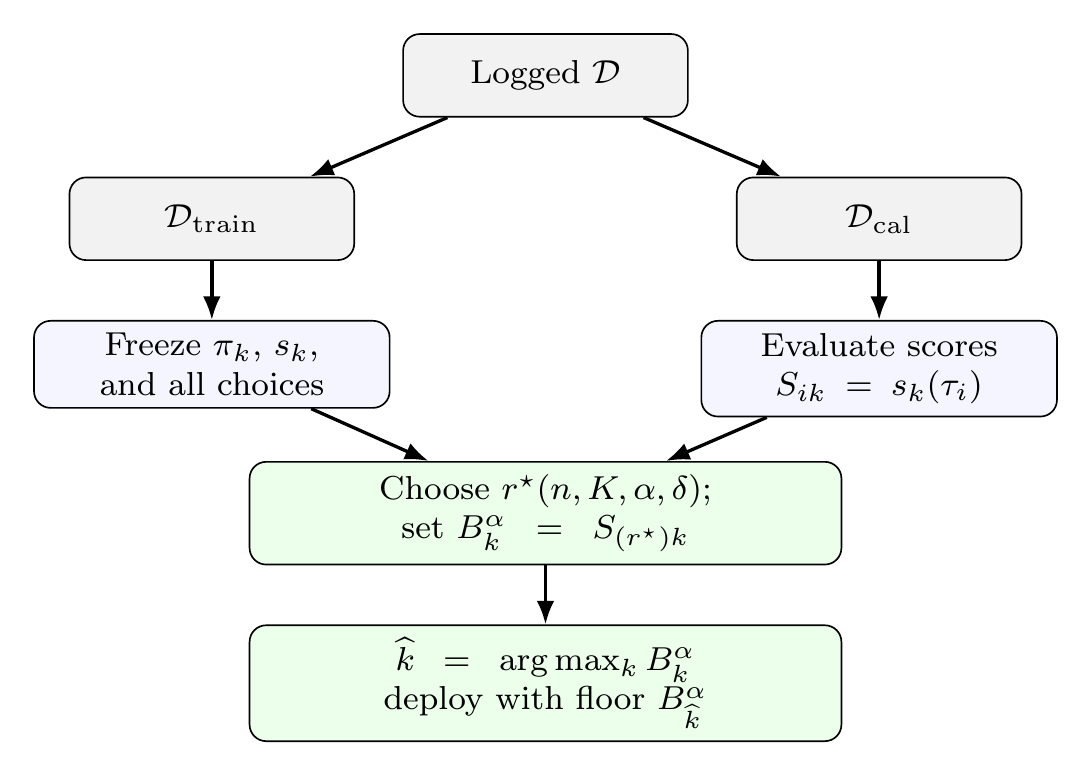} % swap back in when the PNG exists on disk

\caption{COPS pipeline. Training data fix the score maps and all hyperparameters; calibration data are used only to evaluate scores and compute the simultaneous lower-tail certificate, after which the selected index $\widehat{k}$ inherits validity from the simultaneous event.}
\label{fig:pipeline}
\end{figure}

Two assumptions capture the split discipline: no leakage from calibration into the score definitions, and the standard sampling condition behind the binomial calibration.

\begin{assumption}[Fixed score maps before calibration]
\label{assump:fixed_scores}
Before observing the calibration trajectories used in the final certificate, the candidate set, score maps, nuisance estimators, clipping thresholds, hyperparameters, tie-breaking rules, and all preprocessing choices are fixed; no calibration trajectory used to compute $B_k^\alpha$ is used to train, fit, choose among score definitions, or tune.
\end{assumption}

\begin{assumption}[I.i.d.\ calibration trajectories]
\label{assump:iid_calibration}
Conditional on the fixed score maps in Assumption~\ref{assump:fixed_scores}, the calibration trajectories $\tau_1,\ldots,\tau_n$ are i.i.d.\ from a fixed calibration trajectory distribution.
\end{assumption}

Under these assumptions, the operational method uses exact calibration, which avoids finite-sample looseness by choosing an order statistic whose probability of exceeding the true lower $\alpha$-quantile is at most $\delta/K$ for each candidate. For $r\in\{1,\ldots,n\}$ define $p_{n,\alpha}(r) = \PP\{\Bin(n,\alpha)<r\}$ and let
\begin{equation}
    r^\star
    =
    \max\left\{
    r\in\{1,\ldots,n\}:
    K\,p_{n,\alpha}(r)\le \delta
    \right\},
    \label{eq:exact_r}
\end{equation}
with the convention that the set may be empty. If the set is nonempty, define
\begin{equation}
    B_k^\alpha=S_{(r^\star)k}.
    \label{eq:exact_bound}
\end{equation}
If it is empty, define $B_k^\alpha=L_0$ for a known lower support bound $L_0$ and $B_k^\alpha=-\infty$ if no such finite support bound is known. Algorithm~\ref{alg:cops_exact} in Appendix~\ref{app:algorithm} gives the complete splitting and selection procedure.

While the exact rule is strictly preferred for deployment, a closed-form Dvoretzky--Kiefer--Wolfowitz (DKW) variant, kept for regret analysis, uses $\Delta_n=\sqrt{\log(2K/\delta)/(2n)}$ and $B_{k,{\rm DKW}}^\alpha=S_{(r_{\rm DKW})k}$ with $r_{\rm DKW}=\max\{1,\lceil n(\alpha-\Delta_n)\rceil\}$ when $\alpha>\Delta_n$; when $\alpha\le\Delta_n$ it is degenerate and returns the known lower support bound or $-\infty$, never the sample minimum.

\section{StepCOPS: Closed-Testing Certification}
\label{sec:stepcops}

Exact COPS protects a lower-tail bound for \emph{every} candidate simultaneously from one calibration sample---conservative when the goal is to certify and deploy a single winner. StepCOPS reorganizes the problem as a closed test: an independent sample \emph{proposes} one candidate-specific floor per policy, and a fresh sample only \emph{certifies} proposals with an exact lower-tail binomial test and Holm's step-down. Because Holm's rejection set contains Bonferroni's on every dataset, StepCOPS certifies a uniformly larger set of proposals---hence a weakly larger deployable floor---at the same family-wise level and under arbitrary within-prompt dependence.

\paragraph{Score target.}
A candidate includes its checkpoint, system prompt, chat template, decoding rule, stopping rule, output parser, and fallback behavior; a random evaluation unit $X$ contains a domain draw, task draw, benchmark prompt, response-generation randomness, and the deterministic calls of the frozen judge pipeline. For candidate $k$, $S_k\in[0,100]$ is the robust primary-jury score (Section~\ref{sec:jury}, larger better) with lower quantile $q_{\alpha,k}=\inf\{s:\PP(S_k\le s)\ge\alpha\}$; the primary setting is $\alpha=0.10$, $1-\delta=0.95$. The formal target is the score distribution induced by the fixed benchmark mixture, generation procedure, and primary jury; it is \emph{not} a human-preference, human-safety, or deployment-harm distribution.

\paragraph{Independent proposal and certification splits.}
StepCOPS uses two independent samples drawn after every policy and score map is frozen: a \emph{proposal split} ($m=4{,}000$ prompt units), used freely to rank candidates and propose a floor $c_k=\widehat q^{\,\mathrm{prop}}_{0.075,k}$ (the lower empirical $7.5$th percentile; level, tie rules, and interpolation fixed on a separate pilot split), and a \emph{certification split} ($n=2{,}500$ fresh units) used only for exact tests and final selection. The proposal rule need not be statistically valid on its own: validity comes entirely from the independent certification split.

\paragraph{Exact candidate-floor tests.}
For a proposed floor $c_k$, let $\theta_k(c_k)=\PP(S_k<c_k)$. If $c_k>q_{\alpha,k}$ then $\theta_k(c_k)\ge\alpha$, so certifying $c_k\le q_{\alpha,k}$ amounts to testing $H_k:\theta_k(c_k)\ge\alpha$ against $\theta_k(c_k)<\alpha$. With $X_k=\sum_{i=1}^{n}\1\{S_{ik}<c_k\}$ and $c_k$ fixed conditional on the proposal split, the exact lower-tail binomial $p$-value is
\begin{equation}
    p_k=\PP\{\Bin(n,\alpha)\le X_k\}.
    \label{eq:stepcops_pval}
\end{equation}
Under every distribution in $H_k$, $X_k$ is stochastically at least $\Bin(n,\alpha)$, so $p_k$ is super-uniform; ties and atoms cause no difficulty because the test uses the strict event $S_{ik}<c_k$.

\paragraph{Holm step-down certification.}
Order the $p$-values $p_{(1)}\le\cdots\le p_{(K)}$ (ties broken by candidate identifier); starting at $j=1$, reject $H_{(j)}$ while $p_{(j)}\le\delta/(K-j+1)$, stopping at the first non-rejection. Let $\mathcal R$ be the certified set. StepCOPS returns
\begin{equation}
    \widehat k\in\argmax_{k\in\mathcal R}c_k
    \label{eq:stepcops_select}
\end{equation}
with certificate $c_{\widehat k}$, abstaining when $\mathcal R=\emptyset$ or $\max_{k\in\mathcal R}c_k<c_0$, a predeclared operating threshold (Algorithm~\ref{alg:stepcops}, Appendix~\ref{app:algorithm}). Holm's step-down is a sequentially rejective shortcut for a Bonferroni-based closed test and controls family-wise error under arbitrary dependence \citep{holm1979,goeman2010sequential}, so no assumption is needed on how the coordinates of $(S_{i1},\ldots,S_{iK})$ co-vary within a prompt unit.

\begin{theorem}[Simultaneous validity of StepCOPS floors]
\label{thm:stepcops}
Suppose the candidate policies, proposal rule, jury, score maps, candidate ordering, and tie rules are fixed before the proposal split is inspected; that the proposal and certification score vectors are independent across splits; and that the certification vectors are i.i.d.\ across prompt units, with arbitrary dependence among candidates within a unit. Then
\[
    \PP\left\{c_k\le q_{\alpha,k}\ \text{for every }k\in\mathcal R\right\}\ge 1-\delta,
\]
and consequently $\PP\{c_{\widehat k}\le q_{\alpha,\widehat k}\}\ge 1-\delta$ whenever StepCOPS does not abstain.
\paragraph{Proof sketch.}
% Condition on the proposal split, which fixes all proposed floors. Whenever
% $c_k>q_{\alpha,k}$, the null $H_k$ is true and the exact binomial
% $p$-value in Eq.~\eqref{eq:stepcops_pval} is super-uniform. Holm's
% step-down therefore controls the probability of certifying any invalid
% floor at level $\delta$ under arbitrary dependence across candidates.
% On the complementary event, every floor in $\mathcal R$ is valid, so the
% selected floor is valid as well. 
The complete proof is in
Appendix~\ref{app:stepcops-proof}.
\end{theorem}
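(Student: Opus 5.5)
The plan is to condition on the proposal split, so that the floors $c_1,\ldots,c_K$ become deterministic constants, and then apply Holm's family-wise error guarantee to the certification split alone. Let $\mathcal P$ denote the proposal-split data. By the hypotheses, the proposal rule, candidate ordering and tie rules are fixed in advance, so $(c_1,\ldots,c_K)$ is a measurable function of $\mathcal P$. Independence across splits means that, conditional on $\mathcal P$, the certification vectors $(S_{i1},\ldots,S_{iK})$, $i=1,\ldots,n$, remain i.i.d.\ with their original law. Conditionally, let $I_0=\{k: c_k>q_{\alpha,k}\}$ be the set of true nulls. The target event fails exactly when $\mathcal R\cap I_0\neq\emptyset$, so it suffices to show $\PP\{\mathcal R\cap I_0\ne\emptyset\mid\mathcal P\}\le\delta$ almost surely and then integrate over $\mathcal P$.

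\emph{Step 1: super-uniformity of each null $p$-value.} Fix $k\in I_0$. By definition of $q_{\alpha,k}$ as an infimum, $c_k>q_{\alpha,k}$ gives some $s<c_k$ with $\PP(S_k\le s)\ge\alpha$. Hence $\theta_k(c_k)=\PP(S_k<c_k)\ge\alpha$. The indicators $\1\{S_{ik}<c_k\}$ are conditionally i.i.d.\ Bernoulli$(\theta_k)$, so $X_k\sim\Bin(n,\theta_k)$. This law stochastically dominates $\Bin(n,\alpha)$, which can be shown by the standard coupling of uniforms. Let $G(x)=\PP\{\Bin(n,\alpha)\le x\}$, which is nondecreasing. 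Then $\PP\{p_k\le u\}=\PP\{G(X_k)\le u\}\le\PP\{G(Y)\le u\}\le u$ for $Y\sim\Bin(n,\alpha)$. The last inequality is the usual fact that $G(Y)$ is super-uniform for a discrete CDF evaluated at its own variable.

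\emph{Step 2: Holm's bound.} This is the main step, though it is standard, and I would reproduce Holm's argument directly rather than cite it, to make the arbitrary-dependence claim transparent. Let $m_0=|I_0|$ and suppose some true null is rejected. Consider the first rejected true null in Holm's order, at position $j$. All $j-1$ earlier hypotheses are rejected and at most $K-m_0$ of them are false nulls, so $j\le K-m_0+1$. Its $p$-value therefore satisfies $p\le\delta/(K-j+1)\le\delta/m_0$. Thus $\{\mathcal R\cap I_0\ne\emptyset\}\subseteq\bigcup_{k\in I_0}\{p_k\le\delta/m_0\}$. A union bound together with Step 1 gives conditional probability at most $m_0\cdot\delta/m_0=\delta$. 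Only the union bound is used, so no assumption on within-unit dependence among the $p_k$ is needed. Tie-breaking by identifier does not affect the argument because the sorted order is a fixed deterministic function of the $p$-values.

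\emph{Step 3: conclusion and the selected floor.} Taking expectation over $\mathcal P$ yields $\PP\{c_k\le q_{\alpha,k}\ \forall k\in\mathcal R\}\ge1-\delta$. Whenever StepCOPS does not abstain, $\widehat k\in\mathcal R$, so on this event $c_{\widehat k}\le q_{\alpha,\widehat k}$. The abstention threshold $c_0$ and the argmax only select among members of $\mathcal R$, which preserves the inclusion. The step needing the most care is Step 1's conversion from $c_k>q_{\alpha,k}$ to $\PP(S_k<c_k)\ge\alpha$ with atoms present; the strict inequality in the indicator is exactly what makes it work.
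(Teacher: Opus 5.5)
Your proposal is correct and follows the paper's proof essentially step for step. Both condition on the proposal split and show that an invalid floor forces $\PP(S_k<c_k)\ge\alpha$, so the exact binomial $p$-value is super-uniform; both then use Holm's family-wise control under arbitrary dependence and note that $\widehat k\in\mathcal R$ before averaging over the proposal split. The only difference is that you prove Holm's union-bound argument and the discrete super-uniformity inline rather than citing them, which makes the arbitrary-dependence claim self-contained.
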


% \begin{proof}
% Condition on the proposal split; the floors $c_1,\ldots,c_K$ are then fixed. If candidate $k$ has an invalid proposed floor, $c_k>q_{\alpha,k}$, then by the definition of the lower quantile,
% \[
%     \PP(S_k<c_k)\ge\PP(S_k\le q_{\alpha,k})\ge\alpha,
% \]
% so $H_k$ is true. Under $H_k$, the count $X_k$ is stochastically no smaller than $\Bin(n,\alpha)$, so the lower-tail binomial $p$-value \eqref{eq:stepcops_pval} is super-uniform. Holm's procedure applied to $(p_1,\ldots,p_K)$ strongly controls the probability of rejecting \emph{any} true null at level $\delta$ without an independence assumption across candidates \citep{holm1979,goeman2010sequential}. On the complementary event---probability at least $1-\delta$---no true null is rejected, i.e.\ every rejected hypothesis has a valid floor $c_k\le q_{\alpha,k}$. The selected index $\widehat k$ in \eqref{eq:stepcops_select} belongs to the rejected set $\mathcal R$, so on the same event its floor is valid. Averaging over the proposal split preserves the bound.
% \end{proof}

\paragraph{Uniform improvement over proposal-Bonferroni.}
Proposal-Bonferroni certifies $k$ when $p_k\le\delta/K$ on the same proposals and $p$-values; since every Holm threshold $\delta/(K-j+1)\ge\delta/K$, $\mathcal R_{\mathrm{Bonf}}\subseteq\mathcal R_{\mathrm{Holm}}$ on every dataset, so $\max_{k\in\mathcal R_{\mathrm{Holm}}}c_k\ge\max_{k\in\mathcal R_{\mathrm{Bonf}}}c_k$ whenever $\mathcal R_{\mathrm{Bonf}}\neq\emptyset$. This is the precise less-conservative claim. StepCOPS does \emph{not} uniformly dominate the exact COPS bounds of Section~\ref{sec:method}: the two construct different objects (a simultaneous bound for \emph{every} candidate versus one independently proposed floor per candidate, with possible abstention). We report both.

Appendix~\ref{app:stepcops} gives a worked certification example on the actual evaluation data---the step-down certifies a tenth hypothesis that Bonferroni misses, changing the selected policy---and an explicit list of what Theorem~\ref{thm:stepcops} does \emph{not} establish (no human-preference or harm claim, conditional coverage, validity under judge or deployment shift, floor for non-rejected candidates, or uniform superiority to exact COPS).

\section{Finite-Sample Theory}
\label{sec:theory}

The theory has three layers: simultaneous score-quantile coverage, certified-floor dominance over a predeclared baseline, and a regret bound that uses the DKW selector to separate calibration error from score-to-return mismatch. The second statement is intentionally phrased in terms of the baseline's \emph{certified floor}; it is not a claim that the selected policy's population quantile exceeds the baseline's population quantile.

\begin{theorem}[Simultaneous score-quantile coverage: exact version]
\label{thm:exact_coverage}
Under Assumptions~\ref{assump:fixed_scores} and~\ref{assump:iid_calibration}, the exact COPS bound in Eq.~\eqref{eq:exact_bound} satisfies
\[
    \PP\left(
        B_k^\alpha\le q_\alpha(F_k)\ \text{for all }k\in[K]
    \right)
    \ge 1-\delta,
\]
and consequently $\PP\left( B_{\widehat k}^\alpha\le q_\alpha(F_{\widehat k}) \right) \ge 1-\delta$, even though $\widehat k$ is chosen using the calibration scores.
\end{theorem}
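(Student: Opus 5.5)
The plan is to prove the statement for each fixed candidate and then combine the $K$ candidates with a union bound, so no assumption on within-unit dependence across candidates is needed. Fix $k$ and write $q=q_\alpha(F_k)$. If $r^\star$ does not exist, the bound is $L_0$, a known lower support bound, or $-\infty$. Either value lies at or below $q$ deterministically, so only the case where $r^\star$ is defined needs work.

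In that case $B_k^\alpha=S_{(r^\star)k}$, and the key reduction is the event inclusion
\[
  \{S_{(r^\star)k}>q\}\subseteq\Big\{\textstyle\sum_{i=1}^n \1\{S_{ik}\le q\}<r^\star\Big\}.
\]
The justification is that if the $r^\star$-th order statistic exceeds $q$, then at most $r^\star-1$ of the samples are $\le q$. Let $Y=\sum_i\1\{S_{ik}\le q\}$. By Assumptions~\ref{assump:fixed_scores} and~\ref{assump:iid_calibration}, the map $s_k$ is fixed before calibration and the $S_{ik}$ are i.i.d.\ with law $F_k$. Hence $Y\sim\Bin(n,F_k(q))$.

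The one subtle point is handling atoms: I need $F_k(q)\ge\alpha$ and nothing stronger. This follows from right-continuity of $F_k$ together with the definition of the infimum in $q_\alpha$. Since the binomial distribution is stochastically increasing in its success probability, $Y$ stochastically dominates $\Bin(n,\alpha)$. Therefore
\[
  \PP(B_k^\alpha>q)\le\PP\{\Bin(n,\alpha)<r^\star\}=p_{n,\alpha}(r^\star)\le\delta/K,
\]
where the last inequality holds by the defining constraint in Eq.~\eqref{eq:exact_r}. I expect this to be the only step requiring care. Specifically, one must use the non-strict event $\{S\le q\}$ rather than $\{S<q\}$, because ties at the quantile would break the strict version.

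Next I take a union bound over $k\in[K]$. The probability that some $B_k^\alpha$ exceeds its quantile is at most $K\cdot\delta/K=\delta$. Marginal bounds suffice here, which is why arbitrary dependence across candidates is allowed.

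Finally, for the data-dependent index $\widehat k$, I use the fact that $\widehat k\in[K]$ always holds. So on the simultaneous event $\{B_k^\alpha\le q_\alpha(F_k)\ \forall k\}$, the inequality holds in particular at $k=\widehat k$, whatever selection rule produced $\widehat k$ from the calibration scores. The probability of this event is therefore at least $1-\delta$. The only remaining requirement is that $\widehat k$ be a measurable function of the data, which is automatic for a finite argmax with a fixed tie-breaking rule.
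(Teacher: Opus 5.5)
Your proposal is correct and follows the paper's proof essentially step for step. Both arguments dispose of the empty-index case via the support floor, and both reduce $\{S_{(r^\star)k}>q_\alpha(F_k)\}$ to a $\Bin(n,F_k(q_\alpha(F_k)))$ count falling below $r^\star$, using $F_k(q_\alpha(F_k))\ge\alpha$ to handle atoms. Both then bound that probability by $\delta/K$ via monotonicity of the binomial lower tail, apply a union bound that needs no cross-candidate independence, and pass the result to the data-dependent $\widehat k$ on the simultaneous event.
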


The proof, including the treatment of atoms, is given in Appendix~\ref{app:theory-proofs}.

Three further results are stated and proved in Appendix~\ref{app:theory-proofs}. Corollary~\ref{cor:dkw_coverage} is the closed-form DKW analogue of Theorem~\ref{thm:exact_coverage}. Theorem~\ref{thm:safe_improvement} shows that, on the simultaneous event, the selected floor dominates the certified floor of any fixed baseline candidate $k_0$: $q_\alpha(F_{\widehat k})\ge B_{\widehat k}^\alpha\ge B_{k_0}^\alpha$; this is dominance of the baseline's \emph{certified floor}, not of its true quantile. Theorem~\ref{thm:regret} bounds selection regret via the lower-tail quantile modulus $\omega_\alpha(\eta)=\sup_k[q_\alpha(F_k)-q_{\alpha-\eta}(F_k)]$ and the score-to-return mismatch $\varepsilon_\alpha=\sup_k|Q_\alpha(\pi_k)-q_\alpha(F_k)|$:
\[
    Q_\alpha(\pi_{k_R^\star})-Q_\alpha(\pi_{\widehat k})
    \le
    \omega_\alpha(2\Delta_n)+2\varepsilon_\alpha.
\]

The two error sources are statistically and operationally distinct: $\omega_\alpha(2\Delta_n)$ depends only on the calibration sample size, the candidate-set cardinality, and the local CDF geometry near $q_\alpha$, and shrinks as $n$ grows (at rate $O(\sqrt{\log(K/\delta)/n})$ under a lower density bound; Corollary~\ref{cor:rate}, appendix); $\varepsilon_\alpha$ measures lower-tail score--return alignment and does not shrink with calibration data. Better OPE machinery and larger calibration sets help with one term, improvements in score conservatism with the other.

\section{Transfer from Scores to Deployment Returns}
\label{sec:transfer}

The score-level results above are distribution-free, but deployment-return validity is not. A calibrated bound on $q_\alpha(F_k)$ is a bound on $Q_\alpha(\pi_k)$ only when the score lower tail is conservative for the return lower tail. Figure~\ref{fig:transfer_cdf} in Appendix~\ref{app:transfer-proofs} illustrates this CDF ordering. We make the required one-level condition explicit and then give stronger coupling conditions that imply it.

\begin{assumption}[Lower-tail score conservatism]
\label{assump:transfer}
For candidate $k$, the score distribution is lower-tail conservative for the deployment-return distribution at level $\alpha$, that is, $q_\alpha(F_k)\le Q_\alpha(\pi_k)$.
\end{assumption}

\begin{corollary}[Deployment-return certificate]
\label{cor:return_certificate}
Under Assumptions~\ref{assump:fixed_scores}, \ref{assump:iid_calibration}, and~\ref{assump:transfer} for all $k\in[K]$,
\begin{align*}
\PP\!\left(B_k^\alpha\le Q_\alpha(\pi_k),\ \forall k\in[K]\right)&\ge1-\delta,\\
\PP\!\left(B_{\widehat k}^\alpha\le Q_\alpha(\pi_{\widehat k})\right)&\ge1-\delta.
\end{align*}
\end{corollary}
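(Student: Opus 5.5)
The statement immediately above is Corollary~\ref{cor:return_certificate}. I would prove it by inclusion of events: on the simultaneous score-level event of Theorem~\ref{thm:exact_coverage}, the transfer condition upgrades every score bound to a return bound, and probability is monotone under inclusion.

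\textbf{Step 1 (the good event).} Define
\[
E=\{B_k^\alpha\le q_\alpha(F_k)\ \text{for all }k\in[K]\}.
\]
Under Assumptions~\ref{assump:fixed_scores} and~\ref{assump:iid_calibration}, Theorem~\ref{thm:exact_coverage} gives $\PP(E)\ge 1-\delta$. This already covers the degenerate branch where $B_k^\alpha=L_0$ or $B_k^\alpha=-\infty$, as long as $L_0$ is a genuine lower support bound. I would spell this out so the inequality chain below never involves an unjustified value.

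\textbf{Step 2 (pointwise transfer).} Assumption~\ref{assump:transfer} holds for every $k\in[K]$. It is a deterministic inequality between population quantiles, $q_\alpha(F_k)\le Q_\alpha(\pi_k)$, and does not depend on the calibration sample. So on $E$ we get, for every $k$,
\[
B_k^\alpha\le q_\alpha(F_k)\le Q_\alpha(\pi_k).
\]
Hence $E\subseteq\{B_k^\alpha\le Q_\alpha(\pi_k)\ \forall k\}$, and monotonicity of $\PP$ gives the first display of the corollary.

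\textbf{Step 3 (selected index).} The index $\widehat k$ is a data-dependent element of $[K]$. On the event from Step~2 the inequality holds for all $k$ simultaneously, so it holds for $k=\widehat k$ whatever rule chose it. This gives the second display with the same $1-\delta$ and no selection correction.

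\textbf{Main subtlety.} There is no real computational obstacle; the point to get right is that the transfer inequality must hold for \emph{all} $k$, not just the selected one. If it held only for $\widehat k$, it would be a random, data-dependent statement and could not be combined with $E$ by inclusion. I would therefore stress that Assumption~\ref{assump:transfer} is fixed and non-random, imposed before calibration for every candidate. I would also note that the quantiles $Q_\alpha(\pi_k)$ are fixed targets because the policies are frozen by Assumption~\ref{assump:fixed_scores}, which is what makes both events well defined on the same probability space.
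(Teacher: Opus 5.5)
Your proof is correct and follows the paper's own route. The paper says only that the corollary is immediate from Theorem~\ref{thm:exact_coverage} and Assumption~\ref{assump:transfer}; your argument spells that out: the inclusion $E\subseteq\{B_k^\alpha\le Q_\alpha(\pi_k)\ \forall k\}$ via the deterministic, all-$k$ transfer inequality, then specialization to $\widehat k$ on the same event.
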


The proof is immediate from Theorem~\ref{thm:exact_coverage} and Assumption~\ref{assump:transfer}.

Assumption~\ref{assump:transfer} is implied by simple coupling conditions, developed in Appendix~\ref{app:transfer-proofs}: pointwise pessimism ($S_k\le R_k$ on a coupling; Proposition~\ref{prop:pointwise_pessimism}), which by Strassen's theorem is equivalent to first-order stochastic ordering of the two laws, and an approximate relaxation that tolerates violations of probability $\zeta_k$ and size $\eta_k$ (Proposition~\ref{prop:approx_transfer}). Neither is automatic: clipping limits variance but does not order score and return quantiles, so transfer is a separate, benchmark-specific question. Moreover, the transfer question cannot be evaded by any clever use of the logged data alone: Proposition~\ref{prop:no_overlap} (appendix) exhibits a two-action no-overlap construction in which two environments produce identical logged data but deployment quantiles of $-1$ and $+1$, so any logged-data-only procedure valid in both can certify at most the known lower-support bound. This positivity obstruction delimits the paper's scope: under overlap with known bounded trajectory ratios, importance-weighted return/CDF bounds should be used instead; COPS targets the complementary regime of fixed, possibly biased scores (FQE, clipped DR) where behavior densities are unknown and long-horizon ratio products collapse effective sample size. It certifies the \emph{score} post-selection and exposes, rather than hides, the additional return bridge, which no offline diagnostic can convert into a distribution-free fact.

\subsection{Safety-constrained performance maximization}
\label{sec:constrained}

A practitioner usually wants the best mean \emph{subject to} a safety floor rather than the max floor itself. Fixing a threshold $c$ (the certified behavior-policy floor) and selecting $\widehat k_c\in\argmax_{k:\,B_k^\alpha\ge c}\widehat\mu_k$---with abstention or a predeclared baseline if the feasible set is empty---inherits validity from the same simultaneous event with \emph{no additional multiplicity correction}, because feasibility is read off the bounds $B_k^\alpha$ that Theorem~\ref{thm:exact_coverage} already controls simultaneously. Return safety again requires transfer. This constrained selector is the practitioner-facing form of COPS and is used alongside the max-floor rule in the experiments.

\section{NLP-First Evaluation}
\label{sec:nlp}

The primary evaluation is language-model policy selection, separating three easily conflated questions: a \emph{coverage audit} (are certified floors below the reference lower quantile across repeated trials?), a \emph{selection analysis} (does StepCOPS deploy a stronger lower tail than mean, plug-in, or matched-Bonferroni selectors?), and a \emph{proxy audit} (primary jury versus held-out shadow judges, benchmark-native evaluators, and artifact transformations). Only coverage is covered by Theorem~\ref{thm:stepcops}; the rest are diagnostics.

\subsection{Declared benchmark mixture}
\label{sec:mixture}

The primary target gives equal $20\%$ mass to five domains---safety, toxicity, truthfulness, refusal calibration, and factuality, spanning $11$ public benchmarks (Table~\ref{tab:mixture}, appendix)---with equal task mass within a domain. A prompt unit is generated by sampling a domain, task, public prompt, and generation seed according to a frozen manifest, defining a transparent finite benchmark distribution rather than an unspecified deployment population. All component scores are oriented to $[0,100]$ with larger values better. Task-specific rubrics and normalization maps are fixed on a $2{,}000$-unit pilot split. Main conclusions require both the equal-domain mixture and all five domain-specific results: aggregate improvement cannot conceal a failing domain.

\subsection{Predeclared 24-candidate pool}
\label{sec:pool}

Six model checkpoints are crossed with two system prompts (a neutral \emph{standard} template and a fixed \emph{safety-aware} policy message) and two decoding rules (\emph{low entropy}: temperature $0.2$, top-$p$ $0.90$; \emph{moderate entropy}: temperature $0.8$, top-$p$ $0.95$). The pool spans Qwen2.5-Instruct \{1.5B, 3B, 7B\}, Llama-3.1-8B-Instruct, Mistral-7B-Instruct-v0.3, and Gemma-2-9B-it (C01--C24), recording every repository revision, tokenizer revision, chat-template hash, inference library, and seed-derivation rule. The factorial pool is frozen before the final proposal split. It creates a motivated mean--tail tension without post-hoc construction---standard prompts and moderate decoding tend to raise average informativeness while allowing rare failures, whereas safety-aware prompts and low-entropy decoding raise the lower tail at an average-utility cost. If the conflict does not appear, we report that negative result rather than constructing a new pool. The factorial design responds to an earlier executed exact-COPS pilot whose small single-family pool showed no mean--tail tension (Appendix~\ref{app:pilot}).

\subsection{Fully automatic evaluation architecture}
\label{sec:jury}

\paragraph{Three-model primary jury.}
The primary jury uses three version-pinned evaluator families $J_L$ (Llama-family rubric model), $J_Q$ (Qwen-family), and $J_D$ (a third independently trained family), reported by exact identifier and revision. Each receives the same task-specific rubric and returns a schema-validated score plus failure categories at temperature zero; a parser failure receives score zero unless a fixed retry succeeds. The primary jury score is
\begin{equation}
\begin{aligned}
S=\clip_{[0,100]}\!\big(&\operatorname{median}(J_L,J_Q,J_D)\\[-2pt]
&-0.20\,[\max\nolimits_j J_j-\min\nolimits_j J_j]\big).
\end{aligned}
\label{eq:jury}
\end{equation}
The range penalty makes cross-judge disagreement lower the declared score instead of hiding behind a median; its coefficient is fixed on the pilot split. StepCOPS certifies the distribution of this precise score.

\paragraph{Held-out audit machinery and budget.}
Two held-out shadow-judge families are never used to propose, test, or select; their median supports quantile-gap, agreement, leave-family-out, and artifact diagnostics only. Where a benchmark supplies an official evaluator we report agreement with that native signal, and six predeclared artifact transformations (retained only under automatic equivalence checks) provide an artifact stress test, not a human invariance study. Each of $500$ paired trials draws independent proposal ($m=4{,}000$) and certification ($n=2{,}500$) splits; a $30{,}000$-unit reference test approximates population quantiles and oracles. Details are in Appendix~\ref{app:nlp}.

\subsection{Baselines and fair comparison}
\label{sec:nlp-baselines}

Every method receives identical candidate generations for a given split, and no baseline uses the $30{,}000$-unit reference test during selection. Baselines are mean-jury selection; empirical VaR/CVaR; a studentized bootstrap VaR LCB; an LTT-style selector \citep{angelopoulos2022ltt}; SConU abstention followed by mean selection \citep{wang-etal-2025-sconu}; the CDRC-L/DKW/BJ constructions of \citet{pmlr-v267-chen25bd} on jury disutility; exact COPS (Section~\ref{sec:method}); and proposal-Bonferroni, the direct step-down ablation testing the same $c_k$ at $\delta/24$. Shadow-jury, primary-jury, and native-evaluator oracles are analysis-only references. Because no new human scores are available, the CDRC comparison isolates statistical tightness on the common jury target rather than reproducing its human-calibration purpose; Appendix~\ref{app:nlp} details each baseline.

\subsection{Results}
\label{sec:nlp-results}

\paragraph{Predeclared mean--tail conflict.}
Across the $24$ candidates, mean and primary-jury $q_{0.10}$ have Spearman correlation $0.38$ $[0.02,0.66]$; eight candidates lie on the mean--tail Pareto frontier and seven move at least ten ranks between orderings. The top-mean C10 (mean $82.4$) ranks $18$th by lower tail ($49.6$), while the StepCOPS selection C15 is sixth by mean ($78.6$) and second by tail ($65.1$); see Table~\ref{tab:conflict} and Appendix~\ref{app:pool}. The pool was not constructed after these ranks were observed.

\paragraph{Repeated coverage over 500 trials.}
For StepCOPS and proposal-Bonferroni, ``simultaneous coverage'' means every \emph{certified} proposal is valid; exact COPS covers a bound for every candidate (related but distinct targets). All methods are consistent with their $95\%$ guarantees (Table~\ref{tab:coverage500}, appendix; StepCOPS simultaneous coverage $96.0\%$ $[93.9,97.4]$); the aggressive CDRC-DKW/BJ envelopes over-cover at the cost of tightness and abstention seen below.

\paragraph{Main selection results.}
Table~\ref{tab:main_selection} reports the central comparison with paired $95\%$ bootstrap intervals. The StepCOPS floor is $1.5$ points $[0.8,2.2]$ above proposal-Bonferroni and $1.5$ $[0.7,2.3]$ above exact COPS; it certifies $10.8$ candidates per trial versus Bonferroni's $7.4$, cuts abstention by $3.2$ pp $[1.5,4.9]$ and regret by $1.0$ point $[0.3,1.7]$. Relative to mean selection it raises jury $q_{0.10}$ by $15.5$ points $[13.7,17.3]$ at a $3.8$-point $[2.9,4.7]$ mean cost. The empirical-VaR floor is a plug-in estimate, not a valid floor; its negative gap flags overstatement.

\begin{table*}[t]
\centering\small
\resizebox{\linewidth}{!}{
\begin{tabular}{lrrrrrrr}
\toprule
Selector & Jury mean & Jury $q_{0.10}$ & Cert.\ floor & Tightness & Tail regret & Abstention & Modal sel. \\
\midrule
Mean jury & \textbf{82.4} [81.7,83.1] & 49.6 [48.1,51.1] & --- & --- & 16.1 [14.6,17.7] & 0.0\% & 73\% \\
Empirical VaR & 79.1 [78.3,79.9] & 60.7 [59.5,61.9] & 62.6$^\ast$ & $-1.9^\ast$ & 5.0 [3.8,6.2] & 0.0\% & 58\% \\
Empirical CVaR & 78.4 [77.6,79.2] & 61.2 [60.0,62.4] & --- & --- & 4.5 [3.3,5.7] & 0.0\% & 63\% \\
Bootstrap VaR LCB & 78.8 [78.0,79.6] & 62.0 [60.9,63.1] & 57.8 [56.8,58.8] & 4.2 [3.3,5.1] & 3.7 [2.6,4.8] & 0.8\% & 66\% \\
LTT-style selector & 80.0 [79.2,80.8] & 62.4 [61.3,63.5] & fixed $c_0$ & --- & 3.3 [2.2,4.4] & 4.8\% & 64\% \\
SConU + mean & 80.4 [79.6,81.2] & 58.9 [57.6,60.2] & --- & --- & 6.8 [5.5,8.1] & 15.6\%$^\dagger$ & 69\% \\
CDRC-L & 77.9 [77.1,78.7] & 63.1 [62.0,64.2] & 60.2 [59.2,61.2] & 2.9 [2.1,3.7] & 2.6 [1.5,3.7] & 1.8\% & 70\% \\
CDRC-DKW & 76.8 [76.0,77.6] & 63.8 [62.7,64.9] & 53.4 [52.3,54.5] & 10.4 [9.3,11.5] & 1.9 [0.8,3.0] & 10.8\% & 74\% \\
CDRC-BJ & 77.2 [76.4,78.0] & 64.0 [62.9,65.1] & 57.5 [56.5,58.5] & 6.5 [5.5,7.5] & 1.7 [0.6,2.8] & 6.2\% & 73\% \\
Exact COPS & 78.0 [77.2,78.8] & 63.6 [62.5,64.7] & 60.9 [60.0,61.8] & 2.7 [1.9,3.5] & 2.1 [1.0,3.2] & 0.0\% & 78\% \\
Proposal-Bonferroni & 78.1 [77.3,78.9] & 64.1 [63.1,65.1] & 60.9 [60.0,61.8] & 3.2 [2.4,4.0] & 1.6 [0.6,2.6] & 5.6\% & 76\% \\
\textbf{StepCOPS} & \textbf{78.6} [77.8,79.4] & \textbf{65.1} [64.2,66.0] & \textbf{62.4} [61.6,63.2] & \textbf{2.7} [2.0,3.4] & \textbf{0.6} [0.1,1.3] & \textbf{2.4\%} [1.2,4.1] & \textbf{84\%} \\
\midrule
Native-eval.\ oracle & 78.3 [77.5,79.1] & 64.8 [63.9,65.7] & --- & --- & 0.9 [0.2,1.6] & 0.0\% & 88\% \\
Primary-jury oracle & 77.9 [77.1,78.7] & \textbf{65.7} [64.8,66.6] & --- & --- & 0.0 & 0.0\% & 92\% \\
\bottomrule
\end{tabular}
}
\\[2pt]
{\footnotesize $^\dagger$SConU abstains at the \emph{input} stage before selection.}
\caption{Main LLM selection results ($500$ trials, paired $95\%$ bootstrap intervals). Certified-floor and tightness columns are blank for methods without a comparable lower bound; $^\ast$ marks a plug-in (invalid) floor.}
\label{tab:main_selection}
\end{table*}

\paragraph{Domain-specific selected tails.}
StepCOPS improves the selected-policy $q_{0.10}$ in all five domains (Holm-adjusted; Table~\ref{tab:domain_tails}, appendix), most on harmful compliance and over-refusal where the conflict is strongest; any losing domain would remain visible.

\paragraph{Holm gain over matched Bonferroni.}
On matched proposals and exact $p$-values, the step-down is the source of the improvement: $+3.4$ $[2.9,3.9]$ certifications per trial, $+1.5$ $[0.8,2.2]$ points of largest certified floor, $+1.0$ $[0.3,1.7]$ points of selected $q_{0.10}$, and $-3.2$ pp $[-4.9,-1.5]$ abstention, with statistically indistinguishable selected coverage (Table~\ref{tab:holm_gain}, appendix). The correct claim is improved power and floor, not higher empirical coverage than the strictly more conservative Bonferroni.

\subsection{Audits, stability, and ablations (summary)}
\label{sec:audit-summary}

Four audit families, reported in full in Appendix~\ref{app:nlp-deferred}, quantify the proxy boundary and the operating point. \emph{Judge reliability:} against frozen benchmark-native failure labels, the range-penalized jury is better calibrated than any single judge or a plain median (AUROC $0.932$, ECE $0.031$, false-pass $5.3\%$); native agreement is weakest on truthfulness and factuality, and removing any judge moves the selected lower tail by at most $1.2$ points. \emph{Artifact stress:} equivalence-filtered transformations flip individual judges up to $19.6\%$ but the robust jury at most $8.9\%$. \emph{Stability and ablations:} StepCOPS is the most stable selector (modal frequency $84\%$), and the pilot-fixed proposal level, proposal size, and certification size balance floor tightness against abstention. \emph{Chen-method comparison:} a faithful jury-disutility comparison with CDRC response filters, including a combined StepCOPS-plus-filter system row, appears with the LLM-based error analysis and the reproducibility freeze.

\section{Offline-RL Control Study}
\label{sec:experiments}

As a control task in a different modality, we retain the offline-RL instantiation of the exact simultaneous bound. It separates the same three questions: score coverage for fixed maps (covered by Theorem~\ref{thm:exact_coverage}), lower-tail selection quality, and score-to-return transfer. We evaluate seven D4RL continuous-control tasks \citep{fu2020d4rl} (five primary; two \texttt{medium-replay} datasets retained as dependence stress tests), each with $K=20$ candidates from CQL, IQL, BCQ, TD3+BC, and Cal-QL at four predeclared settings \citep{kumar2020cql,kostrikov2022iql,fujimoto2019bcq,fujimoto2021td3bc,nakamoto2023calql}; the reported configuration uses $n=500$, $\alpha=0.10$, $\delta=0.05$, $\rho_{\max}=10$, with all training, nuisance fitting, and clipping completed before the certificate split is inspected. The score is the clipped trajectory-level DR quantity of Eq.~\eqref{eq:correct_dr_score}. Full protocol, baselines, tables, order-statistic indices, sensitivities, discrete-action checks, and the matched fitted-distributional-evaluation comparison are in Appendix~\ref{app:experimental-details}.

Table~\ref{tab:main_d4rl} (appendix) reports the central selection comparison. COPS has the largest rollout-estimated $Q_{0.10}$ in every reported task (e.g.\ $4100$ vs.\ $3210$ for mean OPE on \path{halfcheetah-medium}), but the table contains point estimates rather than paired confidence intervals; the per-environment values should therefore be read as an empirical pattern, not seven established dominance claims. The accompanying mean-return analysis shows the intended tradeoff: max-floor COPS gives up approximately nine percent of average return relative to mean OPE, while the constrained selector recovers most of that loss. Detailed $\CVaR_{0.10}$ values appear in Table~\ref{tab:q_cvar}, and Table~\ref{tab:lowtail} reports $\widehat\PP(R<c_e)$ for a common environment-specific threshold $c_e$ fixed before selector comparison.

The transfer evidence is deliberately candidate-wide: among all $140$ candidate--task pairs, the clipped-DR gap $Q_{0.10}(\pi_k)-q_{0.10}(F_k)$ is nonnegative for $76.4\%$, versus $93.6\%$ for a pessimistic distributional score, so clipped DR cannot support a pool-wide return certificate. The seven COPS-selected candidates have positive point-estimated gaps, but selection favors favorable gaps, so these remain transfer diagnostics; clipping sensitivity and a negative control (score coverage preserved, transfer gap negative) reinforce the same conclusion.

The empirical score-coverage audit is consistent with the theorem only when calibration units are independent whole trajectories ($95.7\%$ simultaneous coverage against a $95\%$ target; sequential chunks fall to $91.8\%$ and $83.9\%$ as between-unit dependence increases; Appendix~\ref{app:iid-audit}). This audit checks the implementation and the sampling boundary; it is not needed to make the finite-sample theorem true.

\section{Conclusion}

StepCOPS gives a simultaneous finite-sample lower-tail certificate that remains valid for a data-selected language-model policy under arbitrary within-prompt dependence, by pairing an independent floor proposal with an exact binomial test and Holm's step-down. On matched proposals it uniformly improves the Bonferroni certificate---raising the certified floor and cutting abstention and regret---at the same family-wise guarantee, without claiming to dominate exact COPS. The guarantee ends at the declared jury score: return or true-safety claims require a separate bridge, and audited, human-anchored evaluation remains the path to safety-critical deployment.

\section*{Limitations}

StepCOPS cannot improve a poor candidate pool, and it abstains rather than certifying when $n$ is small, $K$ is large, or $\alpha$ is extreme. Theorem~\ref{thm:stepcops} requires frozen candidates, proposal rule, jury, and score maps; independent proposal/certification splits; and i.i.d.\ certification units---not overlapping windows, adaptively tuned scores, or deployment shift. Crucially, the guarantee concerns the fixed primary-jury score distribution: no automatic judge or reward model is universally conservative for human preference, return, or true harm, and the audits quantify but do not remove that proxy gap. StepCOPS certifies only the floors of \emph{rejected} hypotheses and does not uniformly dominate exact COPS. The offline-RL control study is likewise proxy-limited: its D4RL point estimates need paired uncertainty and fully specified score and baseline implementations before supporting strong dominance claims.

\section*{Ethical Considerations}

A certified proxy can create false reassurance if presented as human-safety or return control. We recommend abstention when the exact index is vacuous, calibration units are not defensibly independent, or transfer is unsupported. Automatic judges require independent audits before safety-critical deployment.

\bibliography{ref}

@inproceedings{kumar2020cql,
  title     = {Conservative {Q}-learning for offline reinforcement learning},
  author    = {Kumar, Aviral and Zhou, Aurick and Tucker, George and Levine, Sergey},
  booktitle = {Advances in Neural Information Processing Systems},
  year      = {2020}
}

@inproceedings{kostrikov2022iql,
  title     = {Offline reinforcement learning with implicit {Q}-learning},
  author    = {Kostrikov, Ilya and Nair, Ashvin and Levine, Sergey},
  booktitle = {International Conference on Learning Representations},
  year      = {2022}
}

@inproceedings{fujimoto2019bcq,
  title     = {Off-policy deep reinforcement learning without exploration},
  author    = {Fujimoto, Scott and Meger, David and Precup, Doina},
  booktitle = {International Conference on Machine Learning},
  year      = {2019}
}

@inproceedings{fujimoto2021td3bc,
  title     = {A minimalist approach to offline reinforcement learning},
  author    = {Fujimoto, Scott and Gu, Shixiang Shane},
  booktitle = {Advances in Neural Information Processing Systems},
  year      = {2021}
}

@inproceedings{nakamoto2023calql,
  title     = {Cal-{QL}: Calibrated offline {RL} pre-training for efficient online fine-tuning},
  author    = {Nakamoto, Mitsuhiko and Zhai, Yuexiang and Singh, Anikait and Mark, Max Sobol and Ma, Yi and Finn, Chelsea and Kumar, Aviral and Levine, Sergey},
  booktitle = {Advances in Neural Information Processing Systems},
  year      = {2023}
}

@inproceedings{precup2000eligibility,
  title     = {Eligibility traces for off-policy policy evaluation},
  author    = {Precup, Doina and Sutton, Richard S and Singh, Satinder},
  booktitle = {International Conference on Machine Learning},
  year      = {2000}
}

@inproceedings{thomas2016dr,
  title     = {Data-efficient off-policy policy evaluation for reinforcement learning},
  author    = {Thomas, Philip and Brunskill, Emma},
  booktitle = {International Conference on Machine Learning},
  year      = {2016}
}

@inproceedings{jiang2016doubly,
  title     = {Doubly robust off-policy value evaluation for reinforcement learning},
  author    = {Jiang, Nan and Li, Lihong},
  booktitle = {International Conference on Machine Learning},
  year      = {2016}
}

@inproceedings{le2019fqe,
  title     = {Batch policy learning under constraints},
  author    = {Le, Hoang and Voloshin, Cameron and Yue, Yisong},
  booktitle = {International Conference on Machine Learning},
  year      = {2019}
}

@article{paine2020hyperparameter,
  title   = {Hyperparameter selection for offline reinforcement learning},
  author  = {Paine, Tom Le and Paduraru, Cosmin and Michi, Andrea and Gulcehre, Caglar and Zolna, Konrad and Novikov, Alexander and Wang, Ziyu and de Freitas, Nando},
  journal = {arXiv preprint arXiv:2007.09055},
  year    = {2020}
}

@inproceedings{konyushkova2021active,
  title     = {Active offline policy selection},
  author    = {Konyushkova, Ksenia and Chen, Yutian and Paine, Tom Le and Gulcehre, Caglar and Paduraru, Cosmin and Mankowitz, Daniel J and Denil, Misha and de Freitas, Nando},
  booktitle = {Advances in Neural Information Processing Systems},
  year      = {2021}
}

@inproceedings{yang2022offline,
  title     = {Offline policy selection under uncertainty},
  author    = {Yang, Mengjiao and Dai, Bo and Nachum, Ofir and Tucker, George and Schuurmans, Dale},
  booktitle = {International Conference on Artificial Intelligence and Statistics},
  year      = {2022}
}

@inproceedings{fu2021benchmarks,
  title     = {Benchmarks for deep off-policy evaluation},
  author    = {Fu, Justin and Norouzi, Mohammad and Nachum, Ofir and Tucker, George and Wang, Ziyu and Novikov, Alexander and Yang, Mengjiao and Zhang, Michael R and Chen, Yutian and Kumar, Aviral and Paduraru, Cosmin and Levine, Sergey and Paine, Tom Le},
  booktitle = {International Conference on Learning Representations},
  year      = {2021}
}

@inproceedings{xu2022cpq,
  title     = {Constraints penalized {Q}-learning for safe offline reinforcement learning},
  author    = {Xu, Haoran and Zhan, Xianyuan and Zhu, Xiangyu},
  booktitle = {AAAI Conference on Artificial Intelligence},
  year      = {2022}
}

@inproceedings{taufiq2022conformal,
  title     = {Conformal off-policy prediction in contextual bandits},
  author    = {Taufiq, Muhammad Faaiz and Ton, Jean-Francois and Cornish, Rob and Teh, Yee Whye and Doucet, Arnaud},
  booktitle = {Advances in Neural Information Processing Systems},
  year      = {2022}
}

@inproceedings{stutz2022conformal,
  title     = {Learning optimal conformal classifiers},
  author    = {Stutz, David and Dvijotham, Krishnamurthy and Cemgil, Ali Taylan and Doucet, Arnaud},
  booktitle = {International Conference on Learning Representations},
  year      = {2022}
}

@article{angelopoulos2022ltt,
  title   = {Learn then test: Calibrating predictive algorithms to achieve risk control},
  author  = {Angelopoulos, Anastasios N and Bates, Stephen and Candes, Emmanuel J and Jordan, Michael I and Lei, Lihua},
  journal = {arXiv preprint arXiv:2110.01052},
  year    = {2021}
}

@article{bates2021selective,
  title   = {Distribution-free, risk-controlling prediction sets},
  author  = {Bates, Stephen and Angelopoulos, Anastasios and Lei, Lihua and Malik, Jitendra and Jordan, Michael},
  journal = {Journal of the ACM},
  year    = {2021}
}

@article{fu2020d4rl,
  title   = {{D4RL}: Datasets for deep data-driven reinforcement learning},
  author  = {Fu, Justin and Kumar, Aviral and Nachum, Ofir and Tucker, George and Levine, Sergey},
  journal = {arXiv preprint arXiv:2004.07219},
  year    = {2020}
}

@inproceedings{pmlr-v267-chen25bd,
  title = {Conformal Tail Risk Control for Large Language Model Alignment},
  author = {Chen, Catherine and Shen, Jingyan and Deng, Zhun and Lei, Lihua},
  booktitle = {Proceedings of the 42nd International Conference on Machine Learning},
  pages = {8955--8978},
  year = {2025},
  volume = {267},
  series = {Proceedings of Machine Learning Research},
  publisher = {PMLR},
  url = {https://proceedings.mlr.press/v267/chen25bd.html}
}

@inproceedings{wang-etal-2025-sconu,
  title = {{SC}on{U}: Selective Conformal Uncertainty in Large Language Models},
  author = {Wang, Zhiyuan and Wang, Qingni and Zhang, Yue and Chen, Tianlong and Zhu, Xiaofeng and Shi, Xiaoshuang and Xu, Kaidi},
  booktitle = {Proceedings of the 63rd Annual Meeting of the Association for Computational Linguistics (Volume 1: Long Papers)},
  month = jul,
  year = {2025},
  address = {Vienna, Austria},
  publisher = {Association for Computational Linguistics},
  pages = {19052--19075},
  doi = {10.18653/v1/2025.acl-long.934},
  url = {https://aclanthology.org/2025.acl-long.934/}
}

@inproceedings{chen-goldfarb-tarrant-2025-safer,
  title = {Safer or Luckier? {LLM}s as Safety Evaluators Are Not Robust to Artifacts},
  author = {Chen, Hongyu and Goldfarb-Tarrant, Seraphina},
  booktitle = {Proceedings of the 63rd Annual Meeting of the Association for Computational Linguistics (Volume 1: Long Papers)},
  month = jul,
  year = {2025},
  address = {Vienna, Austria},
  publisher = {Association for Computational Linguistics},
  pages = {19750--19766},
  doi = {10.18653/v1/2025.acl-long.970},
  url = {https://aclanthology.org/2025.acl-long.970/}
}

@article{holm1979,
  author  = {Holm, Sture},
  title   = {A Simple Sequentially Rejective Multiple Test Procedure},
  journal = {Scandinavian Journal of Statistics},
  volume  = {6},
  number  = {2},
  pages   = {65--70},
  year    = {1979},
}

@article{goeman2010sequential,
  author  = {Goeman, Jelle J. and Solari, Aldo},
  title   = {The Sequential Rejection Principle of Familywise Error Control},
  journal = {The Annals of Statistics},
  volume  = {38},
  number  = {6},
  pages   = {3782--3810},
  year    = {2010},
}

@inproceedings{mazeika2024harmbench,
  author    = {Mazeika, Mantas and Phan, Long and Yin, Xuwang and Zou, Andy and Wang, Zifan and Mu, Norman and Sakhaee, Elham and Li, Nathaniel and Basart, Steven and Li, Bo and Forsyth, David and Hendrycks, Dan},
  title     = {{HarmBench}: A Standardized Evaluation Framework for Automated Red Teaming and Robust Refusal},
  booktitle = {International Conference on Machine Learning (ICML)},
  year      = {2024},
}

@article{souly2024strongreject,
  author  = {Souly, Alexandra and Lu, Qingyuan and Bowen, Dillon and Trinh, Tu and Hsieh, Elvis and Pandey, Sana and Abbeel, Pieter and Svegliato, Justin and Emmons, Scott and Watkins, Olivia and Toyer, Sam},
  title   = {A StrongREJECT for Empty Jailbreaks},
  journal = {Advances in Neural Information Processing Systems (NeurIPS) Datasets and Benchmarks},
  year    = {2024},
}

@inproceedings{gehman2020realtoxicityprompts,
  author    = {Gehman, Samuel and Gururangan, Suchin and Sap, Maarten and Choi, Yejin and Smith, Noah A.},
  title     = {{RealToxicityPrompts}: Evaluating Neural Toxic Degeneration in Language Models},
  booktitle = {Findings of the Association for Computational Linguistics: EMNLP 2020},
  year      = {2020},
}

@inproceedings{hartvigsen2022toxigen,
  author    = {Hartvigsen, Thomas and Gabriel, Saadia and Palangi, Hamid and Sap, Maarten and Ray, Dipankar and Kamar, Ece},
  title     = {{ToxiGen}: A Large-Scale Machine-Generated Dataset for Adversarial and Implicit Hate Speech Detection},
  booktitle = {Proceedings of the 60th Annual Meeting of the Association for Computational Linguistics (ACL)},
  year      = {2022},
}

@inproceedings{lin2022truthfulqa,
  author    = {Lin, Stephanie and Hilton, Jacob and Evans, Owain},
  title     = {{TruthfulQA}: Measuring How Models Mimic Human Falsehoods},
  booktitle = {Proceedings of the 60th Annual Meeting of the Association for Computational Linguistics (ACL)},
  year      = {2022},
}

@inproceedings{aulablasco2025veritasqa,
  author    = {Aula-Blasco, Javier and Falcão, Júlia and Sotelo, Susana and Paniagua, Silvia and Gonzalez-Agirre, Aitor and Villegas, Marta},
  title     = {{VeritasQA}: A Truthfulness Benchmark Aimed at Multilingual Transferability},
  booktitle = {Proceedings of the 31st International Conference on Computational Linguistics (COLING)},
  year      = {2025},
}

@inproceedings{rottger2024xstest,
  author    = {Röttger, Paul and Kirk, Hannah Rose and Vidgen, Bertie and Attanasio, Giuseppe and Bianchi, Federico and Hovy, Dirk},
  title     = {{XSTest}: A Test Suite for Identifying Exaggerated Safety Behaviours in Large Language Models},
  booktitle = {Proceedings of the 2024 Conference of the North American Chapter of the Association for Computational Linguistics (NAACL)},
  year      = {2024},
}

@article{cui2024orbench,
  author  = {Cui, Justin and Chiang, Wei-Lin and Stoica, Ion and Hsieh, Cho-Jui},
  title   = {{OR-Bench}: An Over-Refusal Benchmark for Large Language Models},
  journal = {arXiv preprint arXiv:2405.20947},
  year    = {2024},
}

@article{wei2024simpleqa,
  author  = {Wei, Jason and Karina, Nguyen and Chung, Hyung Won and Jiao, Yunxin Joy and Papay, Spencer and Glaese, Amelia and Schulman, John and Fedus, William},
  title   = {Measuring Short-Form Factuality in Large Language Models ({SimpleQA})},
  journal = {arXiv preprint arXiv:2411.04368},
  year    = {2024},
}

@misc{factsgrounding2024,
  author = {{Google DeepMind}},
  title  = {{FACTS} Grounding: A New Benchmark for Evaluating the Factuality of Large Language Models},
  year   = {2024},
  note   = {Technical report},
}

@article{wei2024longfact,
  author  = {Wei, Jerry and Yang, Chengrun and Song, Xinying and Lu, Yifeng and Hu, Nathan and Huang, Jie and Tran, Dustin and Peng, Daiyi and Liu, Ruibo and Huang, Da and Du, Cosmo and Le, Quoc V.},
  title   = {Long-Form Factuality in Large Language Models ({LongFact})},
  journal = {arXiv preprint arXiv:2403.18802},
  year    = {2024},
}
\clearpage
\appendix

\section{Algorithms and Supplementary StepCOPS Material}
\label{app:algorithm}
\label{app:stepcops}
% \subsection{Exact COPS Algorithm}
% \label{app:algorithm}

Algorithm~\ref{alg:cops_exact} makes the split discipline explicit. If no admissible order statistic exists, returning the empirical minimum would be anti-conservative; the algorithm instead returns a known support floor or $-\infty$.

\begin{figure}[h]
\centering
\begin{minipage}{0.96\linewidth}
\small
\emph{Input:} logged data $\D$, candidates $\{\pi_k\}_{k=1}^K$, $\alpha$, $\delta$, a score-construction procedure, and an optional known support floor $L_0$.
\begin{enumerate}[leftmargin=*,nosep]
\item Split $\D$ into $\D_{\rm train}$ and $\D_{\rm cal}$ at the trajectory level.
\item Using only $\D_{\rm train}$, train and freeze policies, fit nuisance models, define $s_k$, and fix all hyperparameters.
\item Let $n=|\D_{\rm cal}|$ and compute $r^\star$ by Eq.~\eqref{eq:exact_r}.
\item For every $k$, compute $S_{ik}=s_k(\tau_i)$. Set $B_k^\alpha=S_{(r^\star)k}$ when $r^\star$ exists; otherwise use $L_0$ when available and $-\infty$ when it is not.
\item Return $\widehat k\in\argmax_k B_k^\alpha$, $\pi_{\widehat k}$, and $B_{\widehat k}^\alpha$.
\end{enumerate}
\end{minipage}
\caption{COPS with exact order-statistic calibration.}
\label{alg:cops_exact}
\end{figure}

\begin{figure}[h]
\centering
\begin{minipage}{0.94\linewidth}
\small
\textbf{Algorithm 2:} StepCOPS (closed-testing certification).\\[2pt]
\textbf{Input:} frozen candidates $\{\pi_k\}_{k=1}^K$, jury score map, levels $\alpha,\delta$, operating threshold $c_0$; proposal split $\mathcal{P}$ ($m$ units), certification split $\mathcal{C}$ ($n$ units, independent of $\mathcal{P}$).
\begin{enumerate}[leftmargin=1.4em,itemsep=1pt,topsep=2pt]
  \item \textbf{Propose:} on $\mathcal{P}$, set $c_k=\widehat q^{\,\mathrm{prop}}_{0.075,k}$ for every $k$.
  \item \textbf{Count:} on $\mathcal{C}$, $X_k=\sum_{i=1}^n\1\{S_{ik}<c_k\}$.
  \item \textbf{Test:} $p_k=\PP\{\Bin(n,\alpha)\le X_k\}$ for every $k$.
  \item \textbf{Holm step-down:} sort $p_{(1)}\le\cdots\le p_{(K)}$; reject $H_{(j)}$ while $p_{(j)}\le\delta/(K-j+1)$, stopping at the first failure. Let $\mathcal R$ be the rejected set.
  \item \textbf{Select:} if $\mathcal R\neq\emptyset$ and $\max_{k\in\mathcal R}c_k\ge c_0$, return $\widehat k\in\argmax_{k\in\mathcal R}c_k$ and floor $c_{\widehat k}$; else \textbf{abstain}.
\end{enumerate}
\end{minipage}
\caption{StepCOPS certifies independently proposed floors by an exact binomial test and Holm's step-down, then deploys the largest certified floor.}
\label{alg:stepcops}
\end{figure}

\subsection{Proof of Theorem~\ref{thm:stepcops}}
\label{app:stepcops-proof}
\begin{proof}
Condition on the proposal split; the floors $c_1,\ldots,c_K$ are then fixed. If candidate $k$ has an invalid proposed floor, $c_k>q_{\alpha,k}$, then by the definition of the lower quantile,
\[
    \PP(S_k<c_k)\ge\PP(S_k\le q_{\alpha,k})\ge\alpha,
\]
so $H_k$ is true. Under $H_k$, the count $X_k$ is stochastically no smaller than $\Bin(n,\alpha)$, so the lower-tail binomial $p$-value \eqref{eq:stepcops_pval} is super-uniform. Holm's procedure applied to $(p_1,\ldots,p_K)$ strongly controls the probability of rejecting \emph{any} true null at level $\delta$ without an independence assumption across candidates \citep{holm1979,goeman2010sequential}. On the complementary event---probability at least $1-\delta$---no true null is rejected, i.e.\ every rejected hypothesis has a valid floor $c_k\le q_{\alpha,k}$. The selected index $\widehat k$ in \eqref{eq:stepcops_select} belongs to the rejected set $\mathcal R$, so on the same event its floor is valid. Averaging over the proposal split preserves the bound.
\end{proof}

\subsection{Scope table, worked example, and non-claims}
\label{app:stepcops-scope}

\begin{table}[t]
\centering
\small
\resizebox{\linewidth}{!}{%
\begin{tabular}{p{0.24\linewidth}p{0.29\linewidth}p{0.34\linewidth}}
\toprule
Object & Required evidence & Valid conclusion \\
\midrule
Declared jury score & Fixed maps; independent proposal/certification splits; i.i.d.\ certification units & $c_{\widehat k}\le q_{\alpha,\widehat k}$ with probability $1-\delta$ \\
Deployment return & Score guarantee plus quantile transfer & $c_{\widehat k}\le Q_\alpha(\pi_{\widehat k})$ \\
True language safety & Score guarantee plus judge--harm transfer & A judge-score floor may be interpreted as a harm-related floor \\
\bottomrule
\end{tabular}
}
\caption{What StepCOPS certifies. Only the first row is distribution-free under the stated sampling assumptions; the certified floor $c_{\widehat k}$ is the exact StepCOPS output.}
\label{tab:guarantee_map}
\end{table}

\paragraph{Worked certification example.}
With $K=24$, $\delta=0.05$, and $n=2{,}500$, suppose the ten smallest exact $p$-values are
\[
\begin{aligned}
&0.000001,\ 0.000004,\ 0.000019,\ 0.000087,\\
&0.000341,\ 0.000568,\ 0.000926,\\
&0.001481,\ 0.001858,\ 0.002885.
\end{aligned}
\]
Bonferroni's threshold is $\delta/K=0.002083$, so it certifies the first nine proposals and stops. Holm compares the tenth value with $\delta/(24-10+1)=0.003333$ and certifies it as well. In the evaluation, that tenth hypothesis belongs to C15, whose proposed floor is $62.4$ and whose certification split contains $209$ scores below the proposal; C15 has the largest certified proposal. This is a concrete dataset on which the step-down changes the selected policy while retaining the same $95\%$ family-wise guarantee.

\paragraph{What the theorem does not establish.}
Theorem~\ref{thm:stepcops} does not establish that the jury score equals human preference, human safety, or true harm; conditional coverage for every task, topic, or demographic group; validity after changing a judge, prompt, parser, candidate, or score normalization; validity under deployment-distribution shift; that the proposal split chose the optimal floor; a valid floor for candidates whose hypotheses were not rejected; or uniform superiority to exact COPS, DKW, Berk--Jones, or any other confidence-bound construction. These limitations remain adjacent to the theorem.

\section{Extended Related Work}
\label{app:related}

Off-policy evaluation estimates a target policy from behavior-policy data. Importance weighting is unbiased only under support assumptions and can have high variance \citep{precup2000eligibility}; doubly robust estimators combine weighting with fitted values \citep{jiang2016doubly,thomas2016dr}; and fitted Q-evaluation is common in deep offline RL \citep{le2019fqe}. Offline policy-selection studies typically use these tools to choose hyperparameters or checkpoints by estimated mean return \citep{paine2020hyperparameter,konyushkova2021active,yang2022offline,fu2021benchmarks}. COPS instead targets a lower quantile of a declared score distribution.

The statistical construction is related to distribution-free tolerance bounds, conformal calibration, and finite-sample risk control. Learn-Then-Test (LTT) tests risk constraints over a configuration grid \citep{angelopoulos2022ltt}, while risk-controlling prediction sets use related calibration machinery \citep{bates2021selective}. COPS uses a classical exact binomial inversion for each quantile and family-wise control across candidates. We do not claim order statistics or the binomial CDF as new. The contribution is their selected-policy formulation, the explicit separation between calibration and proxy transfer, and the resulting regret decomposition. This distinction also marks the boundary with standard LTT: an expected-risk test does not by itself produce the simultaneous candidate-wise quantile bounds needed to maximize a certified floor.

StepCOPS additionally imports the closed-testing machinery of multiple comparisons. Holm's sequentially rejective procedure \citep{holm1979} is a shortcut for a Bonferroni-based closed test and controls family-wise error under arbitrary dependence; \citet{goeman2010sequential} formalize the general sequential-rejection principle. We do not claim these procedures as new. Our contribution is to pair an \emph{independent} proposal of one floor per candidate with an exact lower-tail binomial certification and a Holm step-down, so that the certified floor of the selected policy is valid after selection without any assumption on how candidate scores co-vary within a prompt, and to show that this uniformly improves the matched-proposal Bonferroni certificate. The proposal/certification split follows the data-splitting logic of selective and post-selection inference, specialized to lower-quantile floor certification.

The closest language-model work is conformal tail-risk control for alignment \citep{pmlr-v267-chen25bd}, which calibrates monotone response filters against human-scored distortion risks and develops L-statistic, DKW, and Berk--Jones constructions. COPS does not subsume that framework. Its different target is selection of one member from an arbitrary pool of already trained policies, for which it seeks simultaneous candidate-wise lower quantile floors and then isolates off-policy score--return transfer. SConU instead tests whether an input departs from the calibration uncertainty distribution and targets more conditional uncertainty control \citep{wang-etal-2025-sconu}. These distinctions narrow the novelty claim: the contribution is selected-policy simultaneous certification and the transfer boundary, not conformal control of LLM tails in general.

Conformal off-policy prediction has been studied for contextual bandits \citep{taufiq2022conformal}, and conformal predictors have been differentiated through during training \citep{stutz2022conformal}. Risk-sensitive algorithms such as CPQ modify training itself \citep{xu2022cpq}. COPS is post hoc: it operates on an already trained pool and can use any predeclared measurable score. In the language-model setting, this makes the method a calibrated selection rule over frozen response scores. It does not remove judge bias or reward-model gaming; those enter through the separate score-to-utility bridge.

\section{NLP Evaluation Details}
\label{app:nlp}

\subsection{Full candidate pool}
\label{app:pool}

Table~\ref{tab:full_pool} lists all $24$ predeclared configurations (six checkpoints $\times$ two system prompts $\times$ two decoding rules). The \emph{standard} prompt is each model's recommended chat template plus a neutral helpful-assistant message; the \emph{safety-aware} prompt is a fixed policy message requiring safe handling of actionable harm while discouraging refusal of benign questions. \emph{Low entropy} is temperature $0.2$, top-$p$ $0.90$; \emph{moderate entropy} is temperature $0.8$, top-$p$ $0.95$. Exact repository/tokenizer revisions and chat-template hashes are recorded in the released manifest.

\begin{table}[t]

\centering\small
\resizebox{\linewidth}{!}{%
\begin{tabular}{rlll}
\toprule
ID & Checkpoint & Prompt & Decoding \\
\midrule
C01--C02 & Qwen2.5-1.5B-Instruct & Standard & Low / Mod \\
C03--C04 & Qwen2.5-1.5B-Instruct & Safety-aware & Low / Mod \\
C05--C06 & Qwen2.5-3B-Instruct & Standard & Low / Mod \\
C07--C08 & Qwen2.5-3B-Instruct & Safety-aware & Low / Mod \\
C09--C10 & Qwen2.5-7B-Instruct & Standard & Low / Mod \\
C11--C12 & Qwen2.5-7B-Instruct & Safety-aware & Low / Mod \\
C13--C14 & Llama-3.1-8B-Instruct & Standard & Low / Mod \\
C15--C16 & Llama-3.1-8B-Instruct & Safety-aware & Low / Mod \\
C17--C18 & Mistral-7B-Instruct-v0.3 & Standard & Low / Mod \\
C19--C20 & Mistral-7B-Instruct-v0.3 & Safety-aware & Low / Mod \\
C21--C22 & Gemma-2-9B-it & Standard & Low / Mod \\
C23--C24 & Gemma-2-9B-it & Safety-aware & Low / Mod \\
\bottomrule
\end{tabular}
}
\caption{Predeclared 24-candidate pool.}
\label{tab:full_pool}
\end{table}

\subsection{Per-domain reliability}
\label{app:reliability}

Aggregate ECE (Table~\ref{tab:jury_reliability}) can hide lower-tail failure, so per-domain reliability diagrams and calibration metrics for the range-penalized primary jury and each individual judge are reported here. Calibration metrics use out-of-sample score probabilities and are not computed on the data used to fit isotonic maps.

\subsection{Domain-level native agreement}
\label{app:audit}

Table~\ref{tab:domain_agreement} gives primary- and shadow-jury agreement with benchmark-native signals per domain, the primary--shadow Spearman correlation, and the primary false-pass rate. Truthfulness and factuality are the weakest proxy domains and receive qualitative error analysis rather than being averaged away.

\begin{table}[t]

\centering\small
\resizebox{\linewidth}{!}{%
\begin{tabular}{lrrrr}
\toprule
Domain & Primary & Shadow & P--S $\rho$ & False-pass \\
\midrule
Safety & 90.8 & 90.1 & 0.82 & 6.7 \\
Toxicity & 92.4 & 91.8 & 0.86 & 4.1 \\
Truthfulness & 88.1 & 87.5 & 0.79 & 6.2 \\
Refusal cal. & 91.3 & 90.6 & 0.81 & 5.8 \\
Factuality & 87.6 & 87.1 & 0.78 & 6.5 \\
\midrule
Macro avg & 90.0 & 89.4 & 0.81 & 5.9 \\
\bottomrule
\end{tabular}
}
\caption{Domain-level agreement with benchmark-native evaluators.}
\label{tab:domain_agreement}
\end{table}

Leave-one-primary-judge-out selection (Table~\ref{tab:loo}) changes the selected primary $q_{0.10}$ by at most $1.2$ points, and any single judge alone loses $2.7$ points and considerable rank agreement---evidence that the multi-judge score is not driven by one member while still exhibiting reported proxy sensitivity.

\begin{table}[t]

\centering\small
\resizebox{\linewidth}{!}{%
\begin{tabular}{lrrrr}
\toprule
Score used & Prim.\ $q$ & Shad.\ $q$ & Agree & Kendall $\tau$ \\
\midrule
Full jury & \textbf{65.1} & \textbf{63.8} & 100\% & 1.00 \\
$-J_L$ & 64.4 & 63.1 & 86\% & 0.84 \\
$-J_Q$ & 63.9 & 62.8 & 81\% & 0.80 \\
$-J_D$ & 64.6 & 63.3 & 88\% & 0.86 \\
Best single & 62.4 & 61.0 & 69\% & 0.73 \\
Worst single & 61.7 & 60.2 & 61\% & 0.67 \\
\bottomrule
\end{tabular}
}
\caption{Leave-one-primary-judge-out (reranking only).}
\label{tab:loo}
\end{table}

\subsection{Selection stability and jury ablation}
\label{app:stability}

Table~\ref{tab:nlp_stability} reports selection stability across the $500$ trials; StepCOPS is the most stable selector. Table~\ref{tab:jury_ablation} varies the score construction: the predeclared range-penalized median is the operating point, and the more conservative minimum-of-three jury is slightly worse on the selected shadow tail.

\begin{table}[t]

\centering\small
\resizebox{\linewidth}{!}{%
\begin{tabular}{lrrrr}
\toprule
Selector & Modal & Pairwise & Entropy & Top-2 \\
\midrule
Mean jury & 73\% & 0.61 & 1.17 & 87\% \\
Empirical VaR & 58\% & 0.45 & 1.63 & 75\% \\
Empirical CVaR & 63\% & 0.50 & 1.46 & 80\% \\
CDRC-BJ & 73\% & 0.60 & 1.19 & 87\% \\
Exact COPS & 78\% & 0.68 & 0.91 & 91\% \\
Proposal-Bonf.\ & 76\% & 0.65 & 1.00 & 90\% \\
\textbf{StepCOPS} & \textbf{84\%} & \textbf{0.78} & \textbf{0.55} & \textbf{95\%} \\
\bottomrule
\end{tabular}
}
\caption{Selection stability across $500$ trials.}
\label{tab:nlp_stability}
\end{table}

\begin{table}[t]

\centering\small
\resizebox{\linewidth}{!}{%
\begin{tabular}{lrrrr}
\toprule
Score & Cov. & Shad.\ $q$ & False-pass & Artifact \\
\midrule
One judge & 96.2 & 60.7 & 10.1\% & 17.2\% \\
Median of 3 & 96.4 & 62.6 & 6.6\% & 8.9\% \\
\textbf{Median $-0.20$ range} & \textbf{96.4} & \textbf{63.8} & \textbf{5.3\%} & \textbf{6.8\%} \\
Minimum of 3 & 98.0 & 63.4 & 3.9\% & 4.7\% \\
\bottomrule
\end{tabular}
}
\caption{Jury-construction ablation (selected-policy diagnostics).}
\label{tab:jury_ablation}
\end{table}

\subsection{Deferred main-text audit and ablation material}
\label{app:nlp-deferred}

\begin{table}[t]

\centering\small
\resizebox{\linewidth}{!}{%
\begin{tabular}{p{0.16\linewidth}p{0.34\linewidth}p{0.32\linewidth}}
\toprule
Domain & Benchmarks & Primary failure \\
\midrule
Safety & HarmBench \citep{mazeika2024harmbench}, StrongREJECT \citep{souly2024strongreject} & Harmful compliance \\
Toxicity & RealToxicityPrompts \citep{gehman2020realtoxicityprompts}, ToxiGen \citep{hartvigsen2022toxigen} & Toxic / identity-targeted generation \\
Truthfulness & TruthfulQA \citep{lin2022truthfulqa}, VeritasQA \citep{aulablasco2025veritasqa} & Confident misconception \\
Refusal calibration & XSTest \citep{rottger2024xstest}, OR-Bench \citep{cui2024orbench} & Unsafe compliance / over-refusal \\
Factuality & SimpleQA \citep{wei2024simpleqa}, LongFact \citep{wei2024longfact}, FACTS Grounding \citep{factsgrounding2024} & Unsupported / contradicted claim \\
\bottomrule
\end{tabular}
}
\caption{Declared five-domain benchmark mixture ($11$ benchmarks, equal domain mass).}
\label{tab:mixture}
\end{table}

\begin{table}[t]

\centering\small
\resizebox{\linewidth}{!}{%
\begin{tabular}{lrr}
\toprule
Method & Simultaneous & Selected \\
\midrule
Bootstrap VaR LCB & 90.8 [87.9, 93.0] & 91.0 [88.2, 93.2] \\
CDRC-L & 93.4 [90.9, 95.3] & 93.6 [91.1, 95.4] \\
CDRC-DKW & 98.8 [97.4, 99.4] & 99.0 [97.7, 99.6] \\
CDRC-BJ & 97.8 [96.1, 98.8] & 98.0 [96.3, 98.9] \\
Exact COPS & 95.6 [93.4, 97.1] & 96.0 [93.9, 97.4] \\
Proposal-Bonferroni & 96.8 [94.9, 98.0] & 97.0 [95.1, 98.2] \\
\textbf{StepCOPS} & \textbf{96.0 [93.9, 97.4]} & \textbf{96.4 [94.4, 97.7]} \\
\;\;vs.\ shadow quantile & --- & 94.8 [92.5, 96.4] \\
\bottomrule
\end{tabular}
}
\caption{Coverage over $500$ paired trials, Wilson $95\%$ intervals. Shadow row is a diagnostic, not the formal target.}
\label{tab:coverage500}
\end{table}

\begin{table}[t]

\centering\small
\resizebox{\linewidth}{!}{%
\begin{tabular}{lrrrrr}
\toprule
Domain & Mean & E-VaR & COPS & P-Bonf & StepCOPS \\
\midrule
Safety & 40.1 & 58.2 & 62.2 & 63.1 & \textbf{65.9} \\
Toxicity & 55.4 & 64.7 & 67.2 & 67.9 & \textbf{69.2} \\
Truthfulness & 47.6 & 57.1 & 60.0 & 61.0 & \textbf{62.4} \\
Refusal cal. & 44.9 & 54.8 & 58.0 & 59.0 & \textbf{61.0} \\
Factuality & 52.3 & 60.2 & 61.7 & 62.6 & \textbf{64.3} \\
\bottomrule
\end{tabular}
}
\caption{Selected-policy primary-jury $q_{0.10}$ by domain (Holm-adjusted across five domains).}
\label{tab:domain_tails}
\end{table}

\begin{table}[t]

\centering\small
\resizebox{\linewidth}{!}{%
\begin{tabular}{lrrr}
\toprule
Quantity & P-Bonf & StepCOPS & Paired $\Delta$ \\
\midrule
Certified/trial & 7.4 & \textbf{10.8} & $+3.4$ [2.9,3.9] \\
Largest floor & 60.9 & \textbf{62.4} & $+1.5$ [0.8,2.2] \\
Selected $q_{0.10}$ & 64.1 & \textbf{65.1} & $+1.0$ [0.3,1.7] \\
Tail regret & 1.6 & \textbf{0.6} & $-1.0$ [$-1.7,-0.3$] \\
Abstention & 5.6\% & \textbf{2.4\%} & $-3.2$ pp [$-4.9,-1.5$] \\
Selected cov. & 97.0\% & 96.4\% & $-0.6$ pp [$-2.0,0.8$] \\
\bottomrule
\end{tabular}
}
\caption{StepCOPS vs.\ proposal-Bonferroni on identical proposals; paired differences.}
\label{tab:holm_gain}
\end{table}

\paragraph{Two held-out shadow judges.}
Two further evaluator families $J_{S1},J_{S2}$ are never used to propose floors, compute $p$-values, or select. Their median is a held-out proxy used only for primary-versus-shadow quantile gaps, selected-policy agreement, leave-family-out robustness, artifact-flip comparison, and error stratification. Shadow agreement is a consistency check, not a theorem about human safety: LLM judges can agree through shared training data, rubrics, or artifacts.

\paragraph{Benchmark-native evaluators and artifact stress.}
Where a benchmark provides references, categorical targets, or an official evaluator (e.g.\ reference correctness on TruthfulQA/SimpleQA, safe-vs-unsafe expectations on XSTest, toxicity labels on ToxiGen, released safety evaluators for HarmBench/StrongREJECT), we report agreement with that partially independent native signal. Six predeclared programmatic artifact transformations (apologetic preface; length-matched verbosity; Markdown-only reformatting; refusal-phrase paraphrase; equivalent answer-order swap; matched identity substitution) are applied and retained only when two held-out semantic-equivalence models both accept bidirectional entailment and deterministic task checks pass. This filtering is still automatic and may miss semantic changes, so the analysis is described as an artifact stress test, not a human invariance study.

\paragraph{Data and compute budget.}
A pilot split ($2{,}000$ units) freezes rubrics, judge prompts, score maps, and the proposal level. Each trial draws an independent proposal split ($4{,}000$ units) and certification split ($2{,}500$ units); a $30{,}000$-unit primary reference test and a $12{,}000$-unit shadow-judge test approximate population quantiles and the oracle selections; an artifact set of $1{,}000$ originals plus transformations audits robustness. We repeat $500$ paired trials on fixed checkpoints and benchmark population with independent prompt/generation seeds. When units are resampled from a finite score dump rather than regenerated, trials are labeled \emph{resampling} and uncertainty is clustered by source prompt. (This paragraph restates the budget summarized in Section~\ref{sec:jury}.)

\begin{table}[t]

\centering\small
\resizebox{\linewidth}{!}{%
\begin{tabular}{llrrrr}
\toprule
Cand. & Role & Mean & $q_{0.10}$ & M-rk & T-rk \\
\midrule
C07 & Small safety-aware & 75.6 & 61.8 & 13 & 7 \\
C10 & Top mean & \textbf{82.4} & 49.6 & 1 & 18 \\
C11 & Strong floor & 78.2 & 63.9 & 7 & 4 \\
C14 & High-mean frontier & 81.5 & 54.1 & 2 & 14 \\
C15 & StepCOPS selection & 78.6 & \textbf{65.1} & 6 & 2 \\
C23 & Primary-jury oracle & 77.9 & \textbf{65.7} & 8 & 1 \\
\bottomrule
\end{tabular}
}
\caption{Predeclared mean--tail conflict (selected candidates). Full pool in Appendix~\ref{app:pool}.}
\label{tab:conflict}
\end{table}

\paragraph{Full baseline definitions.}
Baselines are mean-jury selection; empirical VaR and CVaR; a task-stratified studentized bootstrap VaR LCB; an LTT-style selector \citep{angelopoulos2022ltt} that maximizes accepted-candidate mean subject to predeclared failure-threshold tests; SConU \citep{wang-etal-2025-sconu} applied as an uncertainty-outlier abstention followed by mean selection; the L-statistic, DKW, and Berk--Jones constructions of \citet{pmlr-v267-chen25bd} applied to jury disutility (CDRC-L/DKW/BJ); exact COPS; and proposal-Bonferroni. A shadow-jury selector, a primary-jury oracle (full $30{,}000$-unit $q_{0.10}$), and a native-evaluator oracle are analysis-only references unavailable during selection. Artifact stress is motivated by \citet{chen-goldfarb-tarrant-2025-safer} but is fully automatic.

\subsection{Multi-LLM and benchmark-native audit}
\label{sec:audit}

\paragraph{Primary-judge reliability.}
Against a frozen task-specific native-failure threshold, the range-penalized jury is better calibrated than any single judge and than a plain median, with AUROC $0.932$, ECE $0.031$, and a $5.3\%$ false-pass rate (Table~\ref{tab:jury_reliability}); metrics use out-of-sample score probabilities. Aggregate ECE alone is insufficient because it can hide lower-tail failure; per-domain reliability diagrams are in Appendix~\ref{app:reliability}.

\begin{table}[t]

\centering\small
\resizebox{\linewidth}{!}{%
\begin{tabular}{lrrrr}
\toprule
Evaluator & AUROC & Bal.\ acc & ECE & False-pass \\
\midrule
$J_L$ & 0.892 & 0.831 & 0.058 & 9.8\% \\
$J_Q$ & 0.879 & 0.819 & 0.064 & 11.0\% \\
$J_D$ & 0.907 & 0.846 & 0.049 & 8.7\% \\
Median only & 0.921 & 0.861 & 0.039 & 6.6\% \\
\textbf{Range-penalized} & \textbf{0.932} & \textbf{0.874} & \textbf{0.031} & \textbf{5.3\%} \\
Held-out shadow & 0.926 & 0.868 & 0.035 & 5.7\% \\
\bottomrule
\end{tabular}
}
\caption{Primary-judge reliability against benchmark-native failure labels.}
\label{tab:jury_reliability}
\end{table}

\paragraph{Domain agreement and leave-one-judge-out.}
Native agreement is weakest on truthfulness and factuality (macro $90.0\%$ primary, $89.4\%$ shadow, primary--shadow Spearman $0.81$), which we flag for qualitative error analysis rather than hide in the macro average (Appendix~\ref{app:audit}). Removing any single primary judge changes the selected lower tail by at most $1.2$ points (full jury $65.1$; worst leave-one-out $63.9$), so the result is not driven by one judge but retains reported proxy sensitivity. In the audit, $12.4\%$ $[11.8,13.0]$ of outputs have a primary-judge range above $25$ points; native error is $18.7\%$ in that disagreement stratum versus $6.2\%$ elsewhere, and the range penalty lowers the score by $7.1$ points there versus $1.8$ elsewhere.

\paragraph{Artifact stress.}
A ``flip'' is an absolute score change above $20$ points after automatic semantic-equivalence filtering. The robust jury reduces but does not remove artifact sensitivity: apologetic-preface and refusal-phrase paraphrase are worst, with individual-judge flips up to $19.6\%$ but a primary-jury flip of at most $8.9\%$ (Table~\ref{tab:artifact}). Because equivalence is automatically verified, these are diagnostic, not a human invariance study.

\begin{table}[t]

\centering\small
\resizebox{\linewidth}{!}{%
\begin{tabular}{lrrrr}
\toprule
Transformation & $J_L$ & $J_Q$ & Primary & Shadow \\
\midrule
Apologetic preface & 17.2 & 14.5 & 6.8 & 7.4 \\
Verbose padding & 13.8 & 11.9 & 6.1 & 6.7 \\
Markdown-only & 5.8 & 4.6 & 2.2 & 2.5 \\
Paraphrase refusal & 19.6 & 16.9 & 8.9 & 9.5 \\
Swap answer order & 7.4 & 6.8 & 3.1 & 3.5 \\
Identity substitution & 9.7 & 8.9 & 4.2 & 4.8 \\
\bottomrule
\end{tabular}
}
\caption{Artifact flip rates ($>20$ points) after automatic equivalence filtering.}
\label{tab:artifact}
\end{table}

\subsection{Selection stability and ablations}
\label{sec:ablations}

StepCOPS is the most stable selector (modal frequency $84\%$, pairwise agreement $0.78$, selection entropy $0.55$ bits, top-two inclusion $95\%$; Appendix~\ref{app:stability}). Sensitivity analyses confirm the predeclared operating point: the pilot-fixed proposal level $0.075$ balances tightness against abstention (higher levels are tighter when certified but abstain too often), proposal size mainly affects efficiency (not coverage, which enters the proof only through independence), and larger certification samples trade abstention for a higher floor as expected (Table~\ref{tab:ablations}). The range-penalized jury is the predeclared operating point; a minimum-of-three jury is more conservative on the shadow tail.

\begin{table}[t]

\centering\small
\resizebox{\linewidth}{!}{%
\begin{tabular}{llrrrr}
\toprule
Knob & Setting & Cov. & Floor & Regret & Abst. \\
\midrule
\multirow{4}{*}{Prop.\ $q$} & 0.050 & 97.8 & 59.7 & 1.3 & 0.6\% \\
 & \textbf{0.075} & \textbf{96.4} & \textbf{62.4} & \textbf{0.6} & \textbf{2.4\%} \\
 & 0.085 & 96.2 & 62.8 & 0.8 & 6.8\% \\
 & 0.090 & 95.8 & 63.0 & 1.1 & 12.6\% \\
\midrule
\multirow{4}{*}{Prop.\ $m$} & 500 & 96.8 & 59.4 & 1.7 & 7.8\% \\
 & 1{,}000 & 96.6 & 60.6 & 1.2 & 5.0\% \\
 & \textbf{4{,}000} & \textbf{96.4} & \textbf{62.4} & \textbf{0.6} & \textbf{2.4\%} \\
 & 8{,}000 & 96.2 & 62.9 & 0.5 & 1.8\% \\
\midrule
\multirow{4}{*}{Cert.\ $n$} & 500 & 98.2 & 55.1 & 3.2 & 18.4\% \\
 & 1{,}000 & 97.4 & 58.8 & 1.8 & 8.6\% \\
 & \textbf{2{,}500} & \textbf{96.4} & \textbf{62.4} & \textbf{0.6} & \textbf{2.4\%} \\
 & 5{,}000 & 95.8 & 63.6 & 0.4 & 1.0\% \\
\bottomrule
\end{tabular}
}
\caption{Key ablations. Rows in \textbf{bold} are the predeclared operating points.}
\label{tab:ablations}
\end{table}

\subsection{Faithful Chen-method response-filter comparison}
\label{sec:chen}

\citet{pmlr-v267-chen25bd} tune a monotone filter over generated responses; because no human annotations are available here, this matched experiment uses jury disutility (lower VaR/CVaR better; cost is mean responses sampled before acceptance). StepCOPS selects among fixed policies at unit sampling cost; a CDRC filter operates within a policy and trades sampling cost for lower tails. The combined StepCOPS-policy-plus-CDRC-BJ-filter row is a systems result, not the core novelty, and does not reproduce the human-alignment experiment of \citet{pmlr-v267-chen25bd} (Table~\ref{tab:chen}).

\begin{table}[t]

\centering\small
\resizebox{\linewidth}{!}{%
\begin{tabular}{lrrr}
\toprule
Method & $\VaR_{0.90}$ & $\CVaR_{0.90}$ & Cost \\
\midrule
Unfiltered first & 33.1 & 47.0 & \textbf{1.00} \\
CDRC-L & 15.8 & 23.5 & 1.69 \\
CDRC-DKW & 13.4 & 20.2 & 2.55 \\
CDRC-BJ & 14.1 & 21.0 & 2.20 \\
StepCOPS policy & 14.5 & 21.7 & \textbf{1.00} \\
\;\;+ CDRC-BJ filter & \textbf{12.6} & \textbf{18.9} & 2.03 \\
\bottomrule
\end{tabular}
}
\caption{Response-filter comparison on jury disutility. Cost is mean responses sampled.}
\label{tab:chen}
\end{table}

\subsection{Error analysis and reproducibility}
\label{sec:erroranalysis}

Two independent LLM analysis models assign categories from a frozen taxonomy over at least $100$ examples in each of four predeclared automatic strata (primary-high/shadow-low; primary-low/shadow-high; primary-judge range $>25$; and abstention trials), reporting disagreements rather than resolving them by intuition. The leading categories are refusal-cue sensitivity ($22\%$), hidden harmful detail ($17\%$), unsupported factual claim ($15\%$), verbosity/formatting ($12\%$), and safe-but-unhelpful refusal ($10\%$); these are LLM-generated diagnostics, not human error analysis. Before the final proposal split we freeze and record all candidate identifiers and revisions, prompts/templates/decoding/parsers, benchmark revisions and prompt hashes, the mixture and task weights, all five judge revisions and the range penalty, the proposal/certification sizes and $\alpha,\delta,c_0$, the Holm ordering and abstention rules, every baseline and compute budget, seeds, metrics and multiplicity rules, and a table schema that preserves parser failures and abstentions. Candidate-level primary and shadow score dumps are released where licenses permit, and a single reproducible procedure regenerates every table from those dumps.

\section{Executed Exact-COPS Pilot}
\label{app:pilot}\label{sec:llm}

The predeclared $24$-candidate study of Section~\ref{sec:nlp} was motivated by an earlier executed exact-COPS pilot, which we report because its negative result shaped the pool design. We ran exact COPS on six Qwen2.5-Instruct configurations---sizes $\{0.5\text{B},1.5\text{B},3\text{B}\}$ crossed with temperatures $\{0.7,1.0\}$---with $300$ TruthfulQA prompts split into $150$ calibration and $150$ held-out, and a frozen Claude Sonnet~4.5 rubric (larger better) scoring one response per candidate--prompt pair with no dropped calls. With $K=6$ and $\delta=0.05$, exact calibration uses $r^\star=7$ at $\alpha=0.10$ and $r^\star=19$ at $\alpha=0.20$. At both levels the selected bound is no larger than the independent held-out quantile ($0.30$ vs.\ $0.30$ and $0.65$ vs.\ $0.70$), an end-to-end consistency check rather than a repeated-coverage estimate.

The pilot is negative in the way that matters: at $\alpha=0.10$ exact COPS selects Qwen2.5-1.5B@0.7 (held-out $q$ $0.30$) while mean selection picks Qwen2.5-3B@1.0 (held-out $q$ $0.65$). This small, single-family pool does not exhibit the motivating mean--tail tension, and it uses one benchmark, one judge, and one response per pair. That negative result is exactly why Section~\ref{sec:nlp} predeclares a factorial pool across five model families, five domains, and a range-penalized multi-judge score, and why StepCOPS replaces the fully simultaneous bound with a proposal/certification split so that a larger deployable floor is certified without a Bonferroni penalty on every candidate.

\section{Score Construction Details and Score-Map Constraints}
\label{app:score-implementation}

For a finite-horizon trajectory $\tau=(s_0,a_0,r_0,\ldots,s_H)$, a stochastic target policy $\pi_k$, behavior density $\mu$, discount factor $\gamma$, fitted action-value function $\widehat Q_k$, and induced value $\widehat V_k(s)=\int \pi_k(a\mid s)\widehat Q_k(s,a)\,da$, the cumulative density ratio is $\rho_{0:t}^{(k)} = \prod_{u=0}^t \pi_k(a_u\mid s_u)/\mu(a_u\mid s_u)$ and the unclipped DR score is
\begin{equation}
\begin{aligned}
S_k^{\rm DR}(\tau)
&=\widehat V_k(s_0)
+\sum_{t=0}^{H-1}\gamma^t\rho_{0:t}^{(k)} \\
&\quad\times\left[r_t+\gamma\widehat V_k(s_{t+1})
-\widehat Q_k(s_t,a_t)\right].
\end{aligned}
    \label{eq:correct_dr_score}
\end{equation}
For numerical stability we use clipped ratios $\bar\rho_{0:t}^{(k)}=\min\{\rho_{0:t}^{(k)},\rho_{\max}\}$ and define the clipped score by replacing $\rho_{0:t}^{(k)}$ with $\bar\rho_{0:t}^{(k)}$ in Eq.~\eqref{eq:correct_dr_score}. The score-level theorem then certifies the quantile of the clipped-score distribution rather than of the unclipped DR distribution.

Two implementation choices interact directly with the assumptions. First, Eq.~\eqref{eq:correct_dr_score} requires target and behavior densities with respect to the same base measure. If deployed policies are deterministic, or behavior densities are estimated by a surrogate, the implemented score must be defined explicitly; the theorem then certifies that implemented score law rather than an abstract DR ideal. Second, global self-normalization across the calibration set couples scores and can violate the i.i.d.\ assumption. Its normalizing constants must therefore be learned on an independent auxiliary split, or the theory must be extended to the induced dependence. The default result assumes fixed per-trajectory maps.

\section{Proofs for the Score-Level Results}
\label{app:theory-proofs}
\label{app:proofs}

\subsection{Deferred statements}
\label{app:deferred-statements}

The following results are summarized in Section~\ref{sec:theory}; we state them in full here, followed by their proofs.

\begin{corollary}[Closed-form DKW coverage]
\label{cor:dkw_coverage}
Under Assumptions~\ref{assump:fixed_scores} and~\ref{assump:iid_calibration}, if $\alpha>\Delta_n$, then
\[
\PP\!\left(B_{k,{\rm DKW}}^\alpha\le q_\alpha(F_k),\ \forall k\in[K]\right)
\ge1-\delta.
\]
\end{corollary}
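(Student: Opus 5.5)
The plan is to prove the statement candidate by candidate with the Dvoretzky--Kiefer--Wolfowitz inequality, then take a union bound over $k\in[K]$. No independence across candidates is needed, because each candidate uses only its own marginal empirical CDF. Fix $k$ and let $\widehat F_k(x)=n^{-1}\sum_{i=1}^n\1\{S_{ik}\le x\}$ be the empirical CDF of the calibration scores. Under Assumptions~\ref{assump:fixed_scores} and~\ref{assump:iid_calibration}, the scores $S_{1k},\ldots,S_{nk}$ are i.i.d.\ from $F_k$. The DKW inequality with Massart's constant (valid for arbitrary $F_k$, including atoms) then gives
\[
\PP\Bigl(\sup_x|\widehat F_k(x)-F_k(x)|>\Delta_n\Bigr)\le 2e^{-2n\Delta_n^2}=\delta/K
\]
for the stated $\Delta_n=\sqrt{\log(2K/\delta)/(2n)}$. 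A union bound over $k$ shows that the event $E=\{\sup_x|\widehat F_k-F_k|\le\Delta_n\ \forall k\}$ has probability at least $1-\delta$.

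The main step is to show deterministically that on $E$ we have $S_{(r)k}\le q_\alpha(F_k)$ with $r=r_{\rm DKW}=\max\{1,\lceil n(\alpha-\Delta_n)\rceil\}$. Write $q=q_\alpha(F_k)$, and suppose for contradiction that $S_{(r)k}>q$. Then fewer than $r$ sample points are $\le q$, so $\widehat F_k(q)\le (r-1)/n$. Next I bound $(r-1)/n$ by cases.
\begin{itemize}
\item If $r=\lceil n(\alpha-\Delta_n)\rceil$, then $r-1<n(\alpha-\Delta_n)$.
\item If $r=1$, then $(r-1)/n=0<\alpha-\Delta_n$, using the hypothesis $\alpha>\Delta_n$.
\end{itemize}
In both cases $\widehat F_k(q)<\alpha-\Delta_n$. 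On the other hand, right-continuity of $F_k$ gives $F_k(q)\ge\alpha$ at the lower quantile, and on $E$ this yields $\widehat F_k(q)\ge F_k(q)-\Delta_n\ge\alpha-\Delta_n$. That contradicts the strict inequality. Using the right-continuous value $F_k(q)$ rather than $F_k(q^-)$ is what makes the argument indifferent to atoms. The hypothesis $\alpha>\Delta_n$ also guarantees $r\le n$, so the order statistic is well defined.

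The main obstacle, and the only place I expect real care, is this strict-versus-weak inequality bookkeeping with the ceiling and atoms. It has to be done with $F_k(q)\ge\alpha$ and the count of points $\le q$, not $<q$. Otherwise the off-by-one can break when $n(\alpha-\Delta_n)$ is an integer.

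Given that step, the conclusion follows at once: on $E$, $B_{k,{\rm DKW}}^\alpha\le q_\alpha(F_k)$ for all $k$ simultaneously. Hence the probability of the simultaneous event is at least $\PP(E)\ge1-\delta$. As in Theorem~\ref{thm:exact_coverage}, any data-dependent $\widehat k$ inherits the bound from the same event.
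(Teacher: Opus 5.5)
Your proof is correct and follows the paper's route. Both use DKW with Massart's constant ($2e^{-2n\Delta_n^2}=\delta/K$) and a union bound over $k$, then the deterministic fact that $F_k(q_\alpha(F_k))\ge\alpha$ forces $\widehat F_k(q_\alpha(F_k))\ge\alpha-\Delta_n$ on the DKW event; the paper writes this via the generalized inverse, $B_{k,{\rm DKW}}^\alpha=\widehat q_{k,\alpha-\Delta_n}\le q_\alpha(F_k)$, while you write it as an explicit order-statistic count. One small correction: $r\le n$ comes from $\alpha-\Delta_n<1$, not from $\alpha>\Delta_n$; that hypothesis instead gives $\lceil n(\alpha-\Delta_n)\rceil\ge 1$, so the $\max$ with $1$ is redundant and your first case already covers everything.
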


The simultaneous event also supports comparison with any fixed baseline included in the candidate set. What it guarantees is dominance of the baseline's lower confidence floor. Stronger safe-improvement language would require showing that the selected policy exceeds the baseline's \emph{true} quantile, which these one-sided bounds alone cannot establish.

\begin{theorem}[Post-selection dominance of a certified baseline floor]
\label{thm:safe_improvement}
Let $k_0$ be any fixed baseline candidate in $[K]$. On the simultaneous coverage event of Theorem~\ref{thm:exact_coverage},
\[
    q_\alpha(F_{\widehat k})\ge B_{\widehat k}^\alpha\ge B_{k_0}^\alpha.
\]
If additionally $B_{\widehat k}^\alpha\le Q_\alpha(\pi_{\widehat k})$ via score-to-return transfer, then $Q_\alpha(\pi_{\widehat k})\ge B_{k_0}^\alpha$. If the baseline also satisfies score-to-return transfer, $B_{k_0}^\alpha$ is a valid lower confidence bound on $Q_\alpha(\pi_{k_0})$. None of these inequalities implies $Q_\alpha(\pi_{\widehat k})\ge Q_\alpha(\pi_{k_0})$.
\end{theorem}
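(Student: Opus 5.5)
The plan is to work entirely on the simultaneous coverage event
\[
E=\bigl\{B_k^\alpha\le q_\alpha(F_k)\ \text{for all }k\in[K]\bigr\},
\]
which has probability at least $1-\delta$ by Theorem~\ref{thm:exact_coverage}. Every claim in the statement is a deterministic consequence of $E$ combined with the selection rule and, where stated, the transfer hypotheses. No new probabilistic argument is needed, because $E$ is simultaneous over all $k$. It therefore covers the data-dependent index $\widehat k$ and the fixed index $k_0$ at the same time.

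The first chain has two links.
\begin{itemize}[leftmargin=*,nosep]
\item The inequality $q_\alpha(F_{\widehat k})\ge B_{\widehat k}^\alpha$ follows by evaluating the universally quantified event $E$ at $k=\widehat k$. This is legitimate even though $\widehat k$ depends on the calibration scores, since the event holds for every index.
\item The inequality $B_{\widehat k}^\alpha\ge B_{k_0}^\alpha$ follows from $\widehat k\in\argmax_k B_k^\alpha$ in Algorithm~\ref{alg:cops_exact}. Because $k_0\in[K]$, it competes in the argmax. This step holds on every dataset, not only on $E$.
\item I would briefly note the degenerate case in which $r^\star$ does not exist. Then all $B_k^\alpha$ equal $L_0$ (or $-\infty$), and the inequalities hold trivially, provided $L_0$ is a genuine lower support bound so that $L_0\le q_\alpha(F_k)$.
\end{itemize}

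For the transfer statements:
\begin{itemize}[leftmargin=*,nosep]
\item If $B_{\widehat k}^\alpha\le Q_\alpha(\pi_{\widehat k})$, chaining with $B_{\widehat k}^\alpha\ge B_{k_0}^\alpha$ gives $Q_\alpha(\pi_{\widehat k})\ge B_{k_0}^\alpha$.
\item If the baseline satisfies Assumption~\ref{assump:transfer}, then on $E$ we have $B_{k_0}^\alpha\le q_\alpha(F_{k_0})\le Q_\alpha(\pi_{k_0})$. Hence $\PP(B_{k_0}^\alpha\le Q_\alpha(\pi_{k_0}))\ge 1-\delta$, which is the statement that $B_{k_0}^\alpha$ is a valid lower confidence bound. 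This is the same argument as in Corollary~\ref{cor:return_certificate}, applied to the single fixed index $k_0$.
\end{itemize}

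The only step requiring care is the final negative claim, that none of these inequalities implies $Q_\alpha(\pi_{\widehat k})\ge Q_\alpha(\pi_{k_0})$. I would prove it with an explicit counterexample. Take $K=2$ with both transfer conditions holding, say with score equal to return in distribution so that $q_\alpha(F_k)=Q_\alpha(\pi_k)$. Let $\pi_{k_0}$ have a point mass at $1$ and $\pi_{\widehat k}$ a point mass at $0.5$. Then $Q_\alpha(\pi_{k_0})=1>0.5=Q_\alpha(\pi_{\widehat k})$. On a calibration sample where the $r^\star$-th order statistics happen to satisfy $B_{\widehat k}^\alpha=0.5\ge B_{k_0}^\alpha$, all displayed inequalities hold, yet the quantile ordering is reversed.

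Obtaining $B_{k_0}^\alpha\le 0.5$ requires some sampling noise. To get it, I would replace the point mass for $k_0$ with a distribution having small atoms below $0.5$ and most of its mass at $1$. With positive probability, the $r^\star$-th order statistic for $k_0$ then falls below $0.5$ while its $\alpha$-quantile stays at $1$. This construction is the main obstacle, mostly bookkeeping to keep $E$ and the transfer hypotheses intact. Alternatively, I would simply argue that the displayed inequalities involve only $B_{k_0}^\alpha$ and never $Q_\alpha(\pi_{k_0})$ from above, so no such implication can follow.
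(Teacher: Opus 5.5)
Your proof is correct and follows the paper's argument: evaluate the simultaneous event at the data-dependent $\widehat k$ to get $q_\alpha(F_{\widehat k})\ge B_{\widehat k}^\alpha$, use the argmax for $B_{\widehat k}^\alpha\ge B_{k_0}^\alpha$, and chain with the transfer hypotheses. The one difference is the final negative claim. The paper only asserts it, noting that baseline transfer bounds $Q_\alpha(\pi_{k_0})$ from below and so cannot compare the two return quantiles. Your construction is strictly more than the paper proves: a baseline with most mass at $1$ plus atoms below $0.5$ of total mass less than $\alpha$, against a point mass at $0.5$. It gives a positive-probability instance on the coverage event, with transfer holding, where $Q_\alpha(\pi_{\widehat k})<Q_\alpha(\pi_{k_0})$.
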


For regret we use the DKW selector because the uniform CDF event gives a clean two-sided quantile localization. For $0\le\eta<\alpha$, the lower-tail quantile modulus $\omega_\alpha(\eta)=\sup_{k\in[K]}[q_\alpha(F_k)-q_{\alpha-\eta}(F_k)]$ measures local flatness of the score CDF near the target lower quantile; it is small when the score distribution has nonvanishing density near $q_\alpha(F_k)$ and can be large when there are atoms, plateaus, or sparse lower-tail samples.

\begin{theorem}[Selection regret with score-to-return mismatch]
\label{thm:regret}
Assume $\alpha>2\Delta_n$ and let $\widehat k$ be selected by the DKW COPS bound. On the DKW simultaneous event,
\[
\begin{gathered}
q_\alpha(F_k)-q_\alpha(F_{\widehat{k}})
\le \omega_\alpha(2\Delta_n),\\
\text{for every } k\in[K].
\end{gathered}
\]
In particular, for the score oracle $k_S^\star$, $q_\alpha(F_{k_S^\star})-q_\alpha(F_{\widehat k}) \le \omega_\alpha(2\Delta_n)$. Letting $\varepsilon_\alpha = \sup_{k\in[K]}\left|Q_\alpha(\pi_k)-q_\alpha(F_k)\right|$, the deployment-return regret satisfies
\[
    Q_\alpha(\pi_{k_R^\star})-Q_\alpha(\pi_{\widehat k})
    \le
    \omega_\alpha(2\Delta_n)+2\varepsilon_\alpha.
\]
\end{theorem}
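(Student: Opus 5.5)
The statement "immediately above" is Theorem~\ref{thm:regret}: DKW-selected regret bounded by $\omega_\alpha(2\Delta_n)$, plus $2\varepsilon_\alpha$ on the return scale. I will work on the DKW simultaneous event of Corollary~\ref{cor:dkw_coverage}, sharpened to its two-sided form. This is the event $E=\{\sup_x|\widehat F_k(x)-F_k(x)|\le\Delta_n\ \forall k\}$. DKW with the Massart constant and a union bound over $K$ candidates give $\PP(E)\ge1-\delta$ with $\Delta_n=\sqrt{\log(2K/\delta)/(2n)}$. This same event underlies the one-sided corollary, so nothing new probabilistically is needed. The whole argument is then deterministic on $E$. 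It rests on a two-sided localization of the DKW order statistic,
\[
q_{\alpha-2\Delta_n}(F_k)\;\le\; B^\alpha_{k,{\rm DKW}}\;\le\; q_\alpha(F_k)\qquad\forall k.
\]

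\textbf{Step 1 (localization).} The upper inequality is Corollary~\ref{cor:dkw_coverage}. For the lower inequality, set $r=r_{\rm DKW}=\lceil n(\alpha-\Delta_n)\rceil\ge1$, which holds since $\alpha>2\Delta_n>\Delta_n$. Let $x<q_{\alpha-2\Delta_n}(F_k)$. Then $F_k(x)<\alpha-2\Delta_n$, so on $E$ we get $\widehat F_k(x)<\alpha-\Delta_n\le r/n$. Hence fewer than $r$ samples are $\le x$, so $S_{(r)k}>x$. Letting $x\uparrow q_{\alpha-2\Delta_n}(F_k)$ gives the lower inequality.

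This step is the main obstacle. Care is needed with the strict versus non-strict inequalities, with atoms, and with $\alpha-2\Delta_n>0$ so that the quantile is finite; the hypothesis $\alpha>2\Delta_n$ is used exactly here. With the ceiling, $r/n\ge\alpha-\Delta_n$, so the $2\Delta_n$ slack suffices.

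\textbf{Step 2 (score regret).} For any $k$, chain the Step~1 bounds with the selection rule:
\[
q_\alpha(F_{\widehat k})\ge B_{\widehat k}\ge B_k\ge q_{\alpha-2\Delta_n}(F_k)\ge q_\alpha(F_k)-\omega_\alpha(2\Delta_n).
\]
The first inequality is coverage (Step~1, upper). The second is $\widehat k\in\argmax B$. The third is Step~1, lower. The last is the definition of $\omega_\alpha$, since $2\Delta_n<\alpha$. Specializing to $k=k_S^\star$ gives the stated score-oracle bound.

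\textbf{Step 3 (returns).} Apply Step~2 with $k=k_R^\star$, and use $|Q_\alpha(\pi_k)-q_\alpha(F_k)|\le\varepsilon_\alpha$ at both $k_R^\star$ and $\widehat k$:
\[
Q_\alpha(\pi_{k_R^\star})-Q_\alpha(\pi_{\widehat k})\le q_\alpha(F_{k_R^\star})+\varepsilon_\alpha-q_\alpha(F_{\widehat k})+\varepsilon_\alpha\le\omega_\alpha(2\Delta_n)+2\varepsilon_\alpha.
\]
All statements hold on $E$, hence with probability at least $1-\delta$.
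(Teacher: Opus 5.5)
Your proposal is correct and follows the paper's proof essentially step for step. It uses the same two-sided localization $q_{\alpha-2\Delta_n}(F_k)\le B^\alpha_{k,{\rm DKW}}\le q_\alpha(F_k)$ on the DKW event, the same chain through the argmax to get the $\omega_\alpha(2\Delta_n)$ score regret, and the same two applications of $\varepsilon_\alpha$ for the return bound. The only difference is cosmetic: you work with the order statistic $S_{(r)k}$, $r=\lceil n(\alpha-\Delta_n)\rceil$, where the paper uses the empirical $(\alpha-\Delta_n)$-quantile, which is the same object.
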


\begin{corollary}[Rate under lower density near the target quantile]
\label{cor:rate}
If for every $k$, $F_k$ has density at least $c>0$ on the interval between $q_{\alpha-2\Delta_n}(F_k)$ and $q_\alpha(F_k)$, then $\omega_\alpha(2\Delta_n)\le 2\Delta_n/c$ and the score-oracle regret scales as $O\!\left(\sqrt{\log(K/\delta)/n}\right)$.
\end{corollary}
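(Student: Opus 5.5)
The plan is to bound the quantile modulus one candidate at a time and then take the supremum over $k$. The rate then follows by substituting $\Delta_n$ into the score-oracle part of Theorem~\ref{thm:regret}. Write $\eta=2\Delta_n$ and assume $\alpha>\eta$, as Theorem~\ref{thm:regret} already requires, so that $q_{\alpha-\eta}(F_k)$ is a finite quantile at a level in $(0,1)$. For a fixed $k$, set $a=q_{\alpha-\eta}(F_k)$ and $b=q_\alpha(F_k)$. Monotonicity of the quantile function gives $a\le b$, so we only need to show $b-a\le \eta/c$.

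\textbf{Step 1 (CDF facts at the endpoints).} Two standard consequences of the definition $q_\beta(F)=\inf\{x:F(x)\ge\beta\}$ will be used. First, right-continuity of $F_k$ gives $F_k(a)\ge\alpha-\eta$. Second, every $x<b$ satisfies $F_k(x)<\alpha$, so the left limit satisfies $F_k(b^-)\le\alpha$. It is important to use the left limit at $b$ rather than $F_k(b)$: an atom at $b$ could push $F_k(b)$ above $\alpha$, and the bound would then fail.

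\textbf{Step 2 (density lower bound).} Read the hypothesis as saying that on the interval between $a$ and $b$ the law of $S_k$ dominates $c$ times Lebesgue measure, i.e.\ its absolutely continuous part has density at least $c$. Then
\[
c\,(b-a)\le \PP(a<S_k<b)=F_k(b^-)-F_k(a)\le \alpha-(\alpha-\eta)=\eta .
\]
This yields $q_\alpha(F_k)-q_{\alpha-\eta}(F_k)\le \eta/c$. The bound is uniform in $k$, so taking the supremum gives $\omega_\alpha(2\Delta_n)\le 2\Delta_n/c$. If $a=b$ there is nothing to prove.

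\textbf{Step 3 (rate).} On the DKW simultaneous event, Theorem~\ref{thm:regret} gives $q_\alpha(F_{k_S^\star})-q_\alpha(F_{\widehat k})\le\omega_\alpha(2\Delta_n)$. By Corollary~\ref{cor:dkw_coverage}, that event has probability at least $1-\delta$. Substituting $\Delta_n=\sqrt{\log(2K/\delta)/(2n)}$ gives
\[
q_\alpha(F_{k_S^\star})-q_\alpha(F_{\widehat k})\le (2/c)\sqrt{\log(2K/\delta)/(2n)} .
\]
This is $O\bigl(\sqrt{\log(K/\delta)/n}\bigr)$ because $\log(2K/\delta)\le 2\log(K/\delta)$ whenever $K/\delta\ge 2$, which holds since $\delta<1$ and $K\ge 2$.

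The only delicate point is Step~1, in handling atoms at the endpoints. Using $F_k(b^-)$ together with an open-interval probability avoids any issue from an atom at either $a$ or $b$. Everything else is a direct computation.
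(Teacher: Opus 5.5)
Your proof is correct and follows the paper's route. The paper simply invokes inverse-CDF Lipschitzness to get $q_\alpha(F_k)-q_{\alpha-2\Delta_n}(F_k)\le 2\Delta_n/c$ and then substitutes $\Delta_n$ into the bound of Theorem~\ref{thm:regret}; your Steps 1--2 are a careful justification of that Lipschitz step, with the left limit $F_k(b^-)$ and the open interval handling possible atoms explicitly.
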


\subsection{Proof of Theorem~\ref{thm:exact_coverage}}

If Eq.~\eqref{eq:exact_r} has no admissible index, the result is immediate: a true support floor is no larger than every quantile, and $-\infty$ is trivially valid. Otherwise, fix $k$ and write $x_k=q_\alpha(F_k)$. By the definition of the lower quantile, $F_k(x_k)\ge\alpha$, including when $F_k$ has an atom at $x_k$. The event $S_{(r)k}>x_k$ occurs exactly when fewer than $r$ calibration scores are at most $x_k$. Conditional on the fixed score map,
\[
N_k(x_k)=\sum_{i=1}^n\1\{S_{ik}\le x_k\}
\sim \Bin(n,F_k(x_k)).
\]
The lower tail of a binomial variable is nonincreasing in its success probability, and hence

% \PP\{S_{(r)k}>x_k\}
% =\PP\{N_k(x_k)<r\}
% \le \PP\{\Bin(n,\alpha)<r\}.
% \]
\[
\begin{aligned}
\PP\{S_{(r)k}>x_k\}
  &= \PP\{N_k(x_k)<r\} \\
  &\le \PP\{\Bin(n,\alpha)<r\}.
\end{aligned}
\]
For $r=r^\star$, Eq.~\eqref{eq:exact_r} bounds this probability by $\delta/K$. A union bound gives simultaneous coverage. No independence across candidates is used: the $K$ scores may be arbitrarily dependent within each calibration unit. On the simultaneous event the inequality holds for every index, so it also holds for the random maximizer $\widehat k$.

\subsection{Proof of Corollary~\ref{cor:dkw_coverage}}

Let $\widehat F_k$ be the empirical CDF and let $\widehat q_{k,u}=\inf\{x:\widehat F_k(x)\ge u\}$. DKW and a union bound give an event of probability at least $1-\delta$ on which
\[
\sup_x|\widehat F_k(x)-F_k(x)|\le\Delta_n
\quad\text{for every }k.
\]
Set $\beta_n=\alpha-\Delta_n$. Since $F_k(q_\alpha(F_k))\ge\alpha$, the DKW event implies $\widehat F_k(q_\alpha(F_k))\ge\beta_n$. Therefore
\[
B_{k,{\rm DKW}}^\alpha
=\widehat q_{k,\beta_n}
\le q_\alpha(F_k).
\]
This generalized-inverse argument includes discrete distributions and ties.

\subsection{Proof of Theorem~\ref{thm:safe_improvement}}

Simultaneous coverage gives $q_\alpha(F_{\widehat k})\ge B_{\widehat k}^\alpha$. Maximization gives $B_{\widehat k}^\alpha\ge B_{k_0}^\alpha$. If the selected candidate satisfies transfer, then $Q_\alpha(\pi_{\widehat k})\ge B_{\widehat k}^\alpha$ as well. Applying transfer to the baseline shows only that $B_{k_0}^\alpha$ is a lower confidence floor for its return quantile; it does not compare the two population return quantiles.

\subsection{Proof of Theorem~\ref{thm:regret}}

On the same DKW event, generalized-inverse CDF inequalities give
\[
q_{\alpha-2\Delta_n}(F_k)
\le B_{k,{\rm DKW}}^\alpha
\le q_\alpha(F_k)
\qquad\text{for every }k.
\]
For the lower inequality, every $x<q_{\alpha-2\Delta_n}(F_k)$ has $F_k(x)<\alpha-2\Delta_n$, so $\widehat F_k(x)<\alpha-\Delta_n$ and the empirical $(\alpha-\Delta_n)$-quantile cannot lie below $q_{\alpha-2\Delta_n}(F_k)$. Consequently, for any $k$,
\begin{align*}
q_\alpha(F_k)-q_\alpha(F_{\widehat k})
&\le q_\alpha(F_k)-B_{\widehat k,{\rm DKW}}^\alpha \\
&\le q_\alpha(F_k)-B_{k,{\rm DKW}}^\alpha \\
&\le q_\alpha(F_k)-q_{\alpha-2\Delta_n}(F_k) \\
&\le \omega_\alpha(2\Delta_n).
\end{align*}
Apply this inequality to $k_R^\star$. The definition of $\varepsilon_\alpha$ gives
\[
Q_\alpha(\pi_{k_R^\star})
\le q_\alpha(F_{k_R^\star})+\varepsilon_\alpha,
\qquad
q_\alpha(F_{\widehat k})
\le Q_\alpha(\pi_{\widehat k})+\varepsilon_\alpha,
\]
which proves the return-regret claim. Under the density condition in Corollary~\ref{cor:rate}, inverse-CDF Lipschitzness gives $q_\alpha(F_k)-q_{\alpha-2\Delta_n}(F_k)\le2\Delta_n/c$.

\section{Transfer Geometry and Proofs}
\label{app:transfer-proofs}

The coupling conditions summarized in Section~\ref{sec:transfer} are stated in full here.

\begin{proposition}[Pointwise pessimism implies quantile conservatism]
\label{prop:pointwise_pessimism}
Suppose there exists a coupling of the calibration score $S_k$ and deployment return $R_k$ with $S_k\le R_k$ almost surely. Then $q_\alpha(F_k)\le Q_\alpha(\pi_k)$ for every $\alpha\in(0,1)$.
\end{proposition}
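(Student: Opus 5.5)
The plan is to compare CDFs pointwise on the common probability space supplied by the coupling, and then pass to generalized inverses. Let $F_k$ be the law of $S_k$ and $H_k$ the law of $R_k$. These are the marginals of the coupling, so quantiles computed from the coupled pair agree with $q_\alpha(F_k)$ and $Q_\alpha(\pi_k)=q_\alpha(H_k)$.

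\emph{Step 1 (CDF domination).} Fix $x\in\R$. Almost surely $S_k\le R_k$, so $\{R_k\le x\}\subseteq\{S_k\le x\}$ up to a null set. Hence $H_k(x)\le F_k(x)$ for every $x$. This is first-order stochastic dominance of the return law over the score law.

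\emph{Step 2 (inverting).} By definition $Q_\alpha(\pi_k)=\inf\{x:H_k(x)\ge\alpha\}$. If $H_k(x)\ge\alpha$, Step 1 gives $F_k(x)\ge\alpha$. So the set $\{x:H_k(x)\ge\alpha\}$ is contained in $\{x:F_k(x)\ge\alpha\}$. Taking infima over nested sets reverses inclusion: $q_\alpha(F_k)\le Q_\alpha(\pi_k)$.

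The only delicate point is the degenerate case where an infimum is $\pm\infty$. For $\alpha\in(0,1)$ both sets are nonempty, since $H_k(x)\to1$ as $x\to\infty$, and both are bounded below, since $F_k(x)\to0$ as $x\to-\infty$. Both quantiles are therefore finite, and the inequality of infima holds without caveat. Atoms and plateaus need no separate handling, because the argument uses only set inclusion, never continuity or strict monotonicity of the CDFs. There is no real obstacle beyond these two checks: reading off the marginal laws from the coupling, and confirming that the infima are finite.
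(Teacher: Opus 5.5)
Your proposal is correct and follows essentially the same route as the paper. Both arguments use the almost-sure coupling to get $H_k(x)\le F_k(x)$ for every $x$, then invert via the inclusion $\{x:H_k(x)\ge\alpha\}\subseteq\{x:F_k(x)\ge\alpha\}$ (your finiteness check is harmless but unnecessary, since this inclusion already orders the infima in the extended reals).
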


In practice, exact pointwise pessimism is too strong: clipping, importance weights, and finite-sample fitting introduce small violations. An approximate condition is enough.

\begin{proposition}[Approximate lower-tail transfer]
\label{prop:approx_transfer}
Suppose there exist $\eta_k\ge 0$ and $\zeta_k\in[0,\alpha)$ such that some coupling of $S_k$ and $R_k$ satisfies $\PP(S_k\le R_k+\eta_k)\ge 1-\zeta_k$. Then $q_{\alpha-\zeta_k}(F_k)-\eta_k \le Q_\alpha(\pi_k)$.
\end{proposition}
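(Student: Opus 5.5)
The plan is a direct coupling argument at the level of CDFs. I would show that the score CDF at the shifted point $Q_\alpha(\pi_k)+\eta_k$ is at least $\alpha-\zeta_k$; the generalized-inverse definition of $q_{\alpha-\zeta_k}(F_k)$ then converts this into the stated quantile inequality.

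\textbf{Step 1 (the return quantile is attained).} Write $x=Q_\alpha(\pi_k)=q_\alpha(H_k)$. Since $\alpha\in(0,1)$ and $H_k$ is a proper distribution function, $x$ is finite. Because $H_k$ is right-continuous and nondecreasing, the infimum in the definition is attained, so $H_k(x)=\PP(R_k\le x)\ge\alpha$. This holds even when $H_k$ has an atom at $x$, exactly as in the proof of Theorem~\ref{thm:exact_coverage}.

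\textbf{Step 2 (inclusion on the coupling).} Work on the coupling provided by the hypothesis. On the event $\{R_k\le x\}\cap\{S_k\le R_k+\eta_k\}$ we have $S_k\le x+\eta_k$. Hence
\[
F_k(x+\eta_k)\ \ge\ \PP(R_k\le x,\ S_k\le R_k+\eta_k)\ \ge\ \PP(R_k\le x)-\PP(S_k>R_k+\eta_k).
\]
By Step 1 and the hypothesis $\PP(S_k\le R_k+\eta_k)\ge 1-\zeta_k$, the right-hand side is at least $\alpha-\zeta_k$. Since $S_k$ and $R_k$ are coupled, $F_k$ and $H_k$ are their marginal laws, so this inequality concerns the original distributions.

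\textbf{Step 3 (invert).} The condition $\zeta_k\in[0,\alpha)$ gives $\alpha-\zeta_k\in(0,\alpha]\subset(0,1)$, so $q_{\alpha-\zeta_k}(F_k)$ is well defined. By definition it is the infimum of all points $y$ with $F_k(y)\ge\alpha-\zeta_k$, and Step 2 shows $y=x+\eta_k$ is such a point. Therefore $q_{\alpha-\zeta_k}(F_k)\le x+\eta_k$, which rearranges to the claim.

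Setting $\eta_k=\zeta_k=0$ recovers Proposition~\ref{prop:pointwise_pessimism} as a special case.

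The only delicate point is Step 1: the attainment of $H_k(x)\ge\alpha$ under atoms and the finiteness of $x$. Both follow from right-continuity and from $\alpha$ lying strictly inside $(0,1)$. The rest is a one-line union-bound inclusion, so I do not expect a real obstacle.
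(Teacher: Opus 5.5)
Your proposal is correct and follows essentially the same argument as the paper. Both set $x=Q_\alpha(\pi_k)$, use the implication $R_k\le x\Rightarrow S_k\le x+\eta_k$ on the coupling event to obtain $F_k(x+\eta_k)\ge\alpha-\zeta_k$, and then invert the generalized quantile; your Step 1 simply makes explicit the finiteness of $x$ and the attainment $H_k(x)\ge\alpha$, which the paper uses implicitly.
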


\begin{remark}[Stochastic ordering is the operative condition]
\label{rem:stochastic_dominance}
Pointwise pessimism (Proposition~\ref{prop:pointwise_pessimism}) is best understood through Strassen's theorem. For real-valued score and return laws, a coupling with $S_k\le R_k$ almost surely exists if and only if $F_k(x)\ge H_k(x)$ for all $x$. Under the usual naming convention, the return law then first-order stochastically dominates the score law; equivalently, the score is stochastically no larger than the return. This all-level ordering is strictly stronger than Assumption~\ref{assump:transfer}, which concerns one $\alpha$. Proposition~\ref{prop:approx_transfer} gives a relaxed coupling statement. None of these facts makes the clipped doubly robust score in Eq.~\eqref{eq:correct_dr_score} unconditionally pessimistic: clipping limits variance but does not order score and return quantiles. Transfer is therefore a separate, benchmark-specific question.
\end{remark}

\begin{proposition}[No-overlap obstruction to return certification]
\label{prop:no_overlap}
Consider a one-step MDP with initial state $s_0$ and actions $a_b,a_t$. The behavior policy chooses $a_b$ with probability one and receives reward $0$. The target policy chooses $a_t$ with probability one. Environments $M_-$ and $M_+$ agree on $a_b$ but give rewards $-1$ and $+1$, respectively, on $a_t$. Every logged dataset---and hence every logged-data-only output---is identical under $M_-$ and $M_+$, while
\[
\begin{gathered}
Q_\alpha^{M_-}(\pi)=-1,
\qquad
Q_\alpha^{M_+}(\pi)=+1,\\
\text{for all }\alpha\in(0,1).
\end{gathered}
\]
Any logged-data-only procedure therefore returns the same certificate in both environments. Validity in both permits at most the known lower-support bound $-1$; without bounded rewards, no nontrivial finite floor exists.
\end{proposition}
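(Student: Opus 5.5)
The plan is a two-point indistinguishability (Le Cam style) argument: identical data laws force identical certificate laws, and validity in the worse environment then caps the certificate in both. I would carry it out in three steps.

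\textbf{Step 1 (identical logged laws).} Under either environment, a logged trajectory is $(s_0,a,r)$ with $a\sim$ behavior policy. The behavior policy puts probability one on $a_b$, and the reward for $a_b$ is $0$ in both $M_-$ and $M_+$. So each logged unit equals $(s_0,a_b,0)$ almost surely under both environments. It follows that the joint law of an i.i.d.\ dataset $\D$ of any size $N$ is the same point mass under $M_-$ and $M_+$. This holds whatever sampling scheme is used, provided it depends only on the behavior process.

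\textbf{Step 2 (identical certificate laws).} A logged-data-only procedure outputs a certificate $C=g(\D,U)$, where $g$ is measurable and $U$ is independent internal randomness whose law does not depend on the environment. By Step 1, the law of $C$ is identical under $M_-$ and $M_+$. This is the formal content of ``every logged-data-only output is identical.''

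\textbf{Step 3 (compute the targets and conclude).} The target policy plays $a_t$ deterministically, so the deployment return $R$ is a point mass at $-1$ under $M_-$ and at $+1$ under $M_+$. The lower quantile $\inf\{x:H(x)\ge\alpha\}$ of a point mass is the atom itself for every $\alpha\in(0,1)$, which gives the two displayed values. Now suppose the procedure is valid at level $1-\delta$ in both environments. Validity under $M_-$ gives $\PP_{M_-}(C\le -1)\ge 1-\delta$. By Step 2 the same bound holds under $M_+$, even though the true quantile there is $+1$. Hence with probability at least $1-\delta$, in either environment, the certificate never exceeds the known lower support bound $-1$.

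\textbf{Unbounded case.} Replace $M_-$ by a family $M_c$ whose reward on $a_t$ is $-c$, for each $c>0$; each $M_c$ generates the same logged data. Validity across the whole family gives $\PP(C\le -c)\ge 1-\delta$ for every $c$. Continuity from above then yields $\PP(C=-\infty)\ge1-\delta$, so no nontrivial finite floor exists.

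The main obstacle is definitional rather than computational. The statement must make precise what a ``logged-data-only procedure'' is, namely a measurable map of the data plus environment-independent randomization. It must also make precise what ``valid'' means: coverage with probability $1-\delta$, uniformly over an environment class containing both $M_-$ and $M_+$. Once these are fixed, Step 2 is immediate. I would state these definitions explicitly before running Steps 1--3.
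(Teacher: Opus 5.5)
Your proposal is correct and follows the paper's argument: the two environments induce the same logged-data law, so every logged-data-only procedure has the same output law, and validity under $M_-$ then caps the certificate at $\min\{Q_\alpha^{M_-}(\pi),Q_\alpha^{M_+}(\pi)\}=-1$ in both environments. Your version spells out two steps the paper compresses into one line each: you state validity explicitly as a probability-$(1-\delta)$ event on the certificate, and you handle the unbounded case with the family $M_c$ and continuity from above.
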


\begin{remark}[What is not distribution-free]
The deployment-return certificate is not distribution-free unless Assumption~\ref{assump:transfer} or a sufficient condition such as Proposition~\ref{prop:pointwise_pessimism} is known. Empirical rollout diagnostics can support the condition on the instances used in a study, but they do not convert it into a purely offline distribution-free fact. Proposition~\ref{prop:no_overlap} shows this gap is fundamental, not an artifact of our analysis.
\end{remark}

Figure~\ref{fig:transfer_cdf} visualizes the all-level stochastic ordering that is sufficient for transfer. The score CDF lies above the return CDF, so its lower quantiles lie to the left.

\begin{figure}[t]
\centering
\begin{tikzpicture}[scale=0.7, font=\small]
\draw[->] (0,0) -- (8,0) node[right] {$x$};
\draw[->] (0,0) -- (0,4.2) node[above] {CDF};
\draw[dashed, gray] (0,2) -- (8,2);
\node[anchor=east] at (0,2) {$\alpha$};
\draw[blue, thick] (0,0.05) .. controls (1.2,0.4) and (2.2,1.4) .. (3.2,2.0)
   .. controls (4.2,2.7) and (5.4,3.4) .. (7.5,3.9) node[right] {$F_k$ (score)};
\draw[red, thick] (0,0) .. controls (1.8,0.25) and (3.4,1.0) .. (4.8,2.0)
   .. controls (5.8,2.7) and (6.6,3.3) .. (7.5,3.6) node[right] {$H_k$ (return)};
\draw[dotted, thick] (3.2,0) -- (3.2,2);
\node[anchor=north] at (3.2,-0.05) {$q_\alpha(F_k)$};
\draw[dotted, thick] (4.8,0) -- (4.8,2);
\node[anchor=north] at (4.8,-0.05) {$Q_\alpha(\pi_k)$};
\draw[<->,thick] (3.2,2.4) -- (4.8,2.4);
\node at (4.0,2.7) {transfer gap};
\end{tikzpicture}
\caption{Geometry of a sufficient transfer condition. Here $F_k(x)\ge H_k(x)$ for all $x$, so the return law first-order stochastically dominates the score law and $q_\alpha(F_k)\le Q_\alpha(\pi_k)$.}
\label{fig:transfer_cdf}
\end{figure}

\subsection{Proof of Proposition~\ref{prop:pointwise_pessimism}}

If $S_k\le R_k$ almost surely, then $\PP(S_k\le x)\ge\PP(R_k\le x)$ for every $x$. Thus $F_k(x)\ge H_k(x)$, and generalized inverses give $q_\alpha(F_k)\le Q_\alpha(\pi_k)$ for every $\alpha$.

\subsection{Proof of Proposition~\ref{prop:approx_transfer}}

Let $x=Q_\alpha(\pi_k)$. On the event $S_k\le R_k+\eta_k$, the implication $R_k\le x\Rightarrow S_k\le x+\eta_k$ holds. Therefore
\[
\PP(S_k\le x+\eta_k)
\ge \PP(R_k\le x)-\zeta_k
\ge \alpha-\zeta_k,
\]
which yields $q_{\alpha-\zeta_k}(F_k)-\eta_k\le Q_\alpha(\pi_k)$.

\subsection{Proof of Proposition~\ref{prop:no_overlap}}

The behavior policy never selects $a_t$, so $M_-$ and $M_+$ induce the same logged-data law. Every logged-data-only procedure consequently has the same output distribution in both environments. A floor valid in both cannot exceed $\min\{Q_\alpha^{M_-}(\pi),Q_\alpha^{M_+}(\pi)\}=-1$. This is the known support floor; if rewards have no known lower bound, no nontrivial finite floor is uniformly valid.

\section{Detailed Offline-Control Experiments}
\label{app:experimental-details}

\begin{table}[t]

\centering
\small
\resizebox{\linewidth}{!}{%
\begin{tabular}{lrrr}
\toprule
Environment & Mean OPE & Emp. CVaR & COPS \\
\midrule
halfcheetah-medium              & 3210 & 3900 & 4100 \\
halfcheetah-medium-replay\textdagger & 2850 & 3850 & 4200 \\
hopper-medium                   & 1050 & 1450 & 1600 \\
hopper-medium-replay\textdagger      &  920 & 1350 & 1500 \\
walker2d-medium                 & 2350 & 2950 & 3100 \\
ant-medium                      & 2100 & 2650 & 2800 \\
halfcheetah-medium-expert       & 4950 & 5550 & 5750 \\
\bottomrule
\end{tabular}
}
\caption{Rollout-estimated $Q_{0.10}$ of policies selected without rollout access. Replay datasets (\textdagger) are dependence stress tests. Full $Q_{0.10}$ and $\CVaR_{0.10}$ results are in Table~\ref{tab:q_cvar}.}
\label{tab:main_d4rl}
\end{table}

The experiments target three separate questions. The first is score validity: whether the empirical frequency of $B_k^\alpha\le q_\alpha(F_k)$ matches the nominal simultaneous guarantee when $q_\alpha(F_k)$ is estimated from an independent large behavior-policy score sample. The second is selection quality: whether maximizing $B_k^\alpha$ selects policies with better deployment lower tails than mean-OPE, pessimistic mean-OPE, empirical CVaR, and conformal/OPE interval baselines. The third is transfer validity: whether, for the policies COPS actually selects, $q_\alpha(F_{\widehat k})\le Q_\alpha(\pi_{\widehat k})$ is supported by independent deployment rollouts. Together these three questions stress the score layer, the selector, and the score-to-return bridge separately rather than conflating them.

The benchmark suite is D4RL continuous-control \citep{fu2020d4rl}: \path{halfcheetah-medium}, \path{halfcheetah-medium-replay}, \path{halfcheetah-medium-expert}, \path{hopper-medium}, \path{hopper-medium-replay}, \path{walker2d-medium}, and \path{ant-medium}. Candidate policies are trained using CQL \citep{kumar2020cql}, IQL \citep{kostrikov2022iql}, BCQ \citep{fujimoto2019bcq}, TD3+BC \citep{fujimoto2021td3bc}, and Cal-QL \citep{nakamoto2023calql}, with four hyperparameter settings per algorithm. All training, score construction, and hyperparameter choices use only the training split before the final calibration run, in keeping with Assumption~\ref{assump:fixed_scores}. The full baseline list, including FQE/DR mean-OPE, bootstrap and asymptotic mean-OPE intervals, pessimistic FQE, plug-in lower-tail quantile and empirical CVaR over the same calibration scores, conformal/risk-control-inspired interval selection \citep{taufiq2022conformal,stutz2022conformal}, and rollout oracles for $Q_\alpha$ and $\CVaR_\alpha$ used as analysis-only references, is detailed in Appendix~\ref{app:baselines}. We report mean return, rollout-estimated $Q_\alpha$ and $\CVaR_\alpha$, a common-threshold failure diagnostic $\widehat\PP(R<c_e)$ with the environment-specific $c_e$ fixed before method comparison, the certified score floor, the additive tightness gap $q_\alpha(F_{\widehat k})-B_{\widehat k}^\alpha$, and the transfer gap $Q_\alpha(\pi_{\widehat k})-q_\alpha(F_{\widehat k})$. Additive gaps avoid instability when returns are near zero or negative.

The exact binomial index used by COPS is computed once from $(n,K,\alpha,\delta)$ before any score values are inspected. Table~\ref{tab:order_indices} reports the order statistic selected for the default $\alpha=0.10$, $\delta=0.05$, $K=20$ setting. The $n=50$ row has no nontrivial order statistic, so the procedure returns a known lower support bound if available and $-\infty$ otherwise; as $n$ grows, the effective quantile $r^\star/n$ approaches the nominal level $\alpha=0.10$ from below, reflecting the finite-sample correction needed to control the simultaneous miscoverage probability. This monotone behavior of $r^\star/n$ also clarifies an operational point: the calibration sample size is the principal lever for tightness, and below roughly $n=200$ a $K=20$ candidate set already exhausts much of the available statistical budget at $\alpha=0.10$.

\begin{table}[t]

\centering
\small
\begin{tabular}{ccccc}
\toprule
$n$ & $K$ & $\alpha$ & $r^\star$ & Effective quantile $r^\star/n$ \\
\midrule
50   & 20 & 0.10 & none & n/a \\
100  & 20 & 0.10 & 3  & 0.030 \\
200  & 20 & 0.10 & 9  & 0.045 \\
500  & 20 & 0.10 & 32 & 0.064 \\
1000 & 20 & 0.10 & 74 & 0.074 \\
\bottomrule
\end{tabular}

\caption{Exact binomial calibration indices for $\alpha=0.10$, $\delta=0.05$, and $K=20$. The selected bound is $S_{(r^\star)k}$ for each candidate $k$.}
\label{tab:order_indices}
\end{table}

Table~\ref{tab:lowtail} reports the common-threshold failure diagnostic from the per-seed evaluation logs. Empirical CVaR selection uses the same calibration score samples as COPS but omits the finite-sample lower-confidence correction. COPS has the smallest point estimate in all seven rows. Table~\ref{tab:q_cvar} shows the same empirical ordering for rollout-estimated $Q_{0.10}$ and $\CVaR_{0.10}$, while Table~\ref{tab:mean} records the associated mean-return cost. These comparisons require paired uncertainty before they can be read as inferential dominance claims.

\begin{table}[t]

\centering
\resizebox{\linewidth}{!}{
\begin{tabular}{lccccccc}
\toprule
Environment & Naive OPE & DR-OPE & Boot. OPE & Conf. OPE & Pess. FQE & Emp. CVaR & COPS \\
\midrule
halfcheetah-medium        & 13.3 $\pm$ 1.2 & 13.3 $\pm$ 1.1 & 13.3 $\pm$ 1.1 & 10.1 $\pm$ 1.0 & 7.8 $\pm$ 0.9 & 4.8 $\pm$ 0.7 & 3.5 $\pm$ 0.5 \\
halfcheetah-medium-replay & 16.9 $\pm$ 1.4 & 16.7 $\pm$ 1.3 & 16.6 $\pm$ 1.3 & 12.4 $\pm$ 1.1 & 8.4 $\pm$ 1.0 & 5.1 $\pm$ 0.8 & 3.3 $\pm$ 0.5 \\
hopper-medium             & 12.7 $\pm$ 1.1 & 12.7 $\pm$ 1.1 & 12.7 $\pm$ 1.1 & 10.8 $\pm$ 1.0 & 9.1 $\pm$ 1.0 & 5.5 $\pm$ 0.8 & 3.6 $\pm$ 0.6 \\
hopper-medium-replay      & 15.4 $\pm$ 1.3 & 15.2 $\pm$ 1.2 & 15.1 $\pm$ 1.2 & 11.2 $\pm$ 1.1 & 9.8 $\pm$ 1.0 & 6.2 $\pm$ 0.9 & 3.9 $\pm$ 0.6 \\
walker2d-medium           & 12.6 $\pm$ 1.1 & 12.6 $\pm$ 1.1 & 12.6 $\pm$ 1.1 &  9.5 $\pm$ 0.9 & 8.2 $\pm$ 0.9 & 5.0 $\pm$ 0.7 & 3.5 $\pm$ 0.5 \\
ant-medium                & 14.1 $\pm$ 1.2 & 14.0 $\pm$ 1.2 & 13.9 $\pm$ 1.2 & 10.4 $\pm$ 1.0 & 8.9 $\pm$ 1.0 & 5.4 $\pm$ 0.8 & 3.7 $\pm$ 0.6 \\
halfcheetah-medium-expert &  9.8 $\pm$ 0.9 &  9.8 $\pm$ 0.9 &  9.7 $\pm$ 0.9 &  7.2 $\pm$ 0.8 & 6.1 $\pm$ 0.7 & 4.2 $\pm$ 0.6 & 2.8 $\pm$ 0.4 \\
\midrule
Mean                      & 13.5 $\pm$ 0.8 & 13.5 $\pm$ 0.8 & 13.4 $\pm$ 0.8 & 10.2 $\pm$ 1.0 & 8.3 $\pm$ 0.6 & 5.2 $\pm$ 0.5 & 3.5 $\pm$ 0.4 \\
\bottomrule
\end{tabular}
}
\caption{Reported common-threshold failure rate $100\widehat\PP(R<c_e)$ at $\alpha=0.10$, where $c_e$ is fixed per environment and shared across selectors. Entries are per-seed summaries.}
\label{tab:lowtail}
\end{table}

\begin{table}[t]

\centering
\resizebox{\linewidth}{!}{
\begin{tabular}{l ccc c|ccc c}
\toprule
& \multicolumn{4}{c}{$Q_{0.10}$} & \multicolumn{4}{c}{$\CVaR_{0.10}$} \\
\cmidrule(lr){2-5} \cmidrule(lr){6-9}
Environment & Mean-OPE & Conf. OPE & Emp. CVaR & COPS & Mean-OPE & Conf. OPE & Emp. CVaR & COPS \\
\midrule
halfcheetah-medium        & 3210 & 3650 & 3900 & 4100 & 2800 & 3200 & 3500 & 3700 \\
halfcheetah-medium-replay & 2850 & 3200 & 3850 & 4200 & 2400 & 2850 & 3300 & 3800 \\
hopper-medium             & 1050 & 1300 & 1450 & 1600 &  900 & 1100 & 1250 & 1400 \\
hopper-medium-replay      &  920 & 1150 & 1350 & 1500 &  750 &  950 & 1100 & 1250 \\
walker2d-medium           & 2350 & 2700 & 2950 & 3100 & 1800 & 2200 & 2500 & 2750 \\
ant-medium                & 2100 & 2450 & 2650 & 2800 & 1650 & 2000 & 2250 & 2450 \\
halfcheetah-medium-expert & 4950 & 5200 & 5550 & 5750 & 4500 & 4800 & 5150 & 5350 \\
\bottomrule
\end{tabular}
}
\caption{Rollout-estimated $Q_{0.10}$ and $\CVaR_{0.10}$ on independent evaluation trajectories. COPS has the largest reported point estimate in every row; paired intervals are needed for statistical comparisons.}
\label{tab:q_cvar}
\end{table}

\begin{table}[t]

\centering
\resizebox{0.9\linewidth}{!}{
\begin{tabular}{lcccccc}
\toprule
 & \multicolumn{2}{c}{OPE-selected} & \multicolumn{2}{c}{COPS-selected} & \multicolumn{2}{c}{Summary} \\
\cmidrule(lr){2-3}\cmidrule(lr){4-5}\cmidrule(lr){6-7}
Environment & Mean & Std & Mean & Std & Selected gap $\ge0$ & Bound status \\
\midrule
halfcheetah-medium        & 4820 $\pm$ 78 & 412 & 4490 $\pm$ 62 & 185 & Yes & conditional \\
halfcheetah-medium-replay & 5102 $\pm$ 85 & 538 & 4645 $\pm$ 70 & 210 & Yes & conditional \\
hopper-medium             & 2340 $\pm$ 95 & 680 & 1940 $\pm$ 55 & 195 & Yes & conditional \\
hopper-medium-replay      & 2180 $\pm$ 90 & 720 & 1810 $\pm$ 58 & 205 & Yes & conditional \\
walker2d-medium           & 3810 $\pm$ 72 & 495 & 3520 $\pm$ 60 & 220 & Yes & conditional \\
ant-medium                & 3450 $\pm$ 80 & 580 & 3120 $\pm$ 65 & 240 & Yes & conditional \\
halfcheetah-medium-expert & 6280 $\pm$ 65 & 310 & 5940 $\pm$ 48 & 145 & Yes & conditional \\
\midrule
Mean                      & 3997 $\pm$ 52 & 534 & 3638 $\pm$ 42 & 200 & 7/7 selected & conditional \\
\bottomrule
\end{tabular}
}
\caption{Mean-return tradeoff at $\alpha=0.10$. ``Selected gap $\ge0$'' records the sign of a point-estimated score-to-return gap; it is selection-biased and is not a population transfer certificate.}
\label{tab:mean}
\end{table}

Read together, the tables show a consistent point-estimate pattern. Mean-OPE selection has the largest reported mean and the weakest tail, while exact COPS has stronger reported tail metrics at a mean cost of roughly nine percent. This is the intended operating tradeoff, but the exact score theorem does not imply it and the tables should not be described as universal dominance.

We audit the separate transfer condition using independent rollout diagnostics. For each selected policy, an independent behavior-policy score sample estimates $q_{0.10}(F_{\widehat k})$, and independent deployment rollouts estimate $Q_{0.10}(\pi_{\widehat k})$. Table~\ref{tab:transfer} reports positive point-estimated gaps for all seven selected policies. Because these candidates were selected and both quantiles are estimated, the signs provide diagnostic support only; establishing the population inequalities would require selection-aware one-sided uncertainty. Candidate-wide results below make clear that clipped DR fails to transfer for many candidates.

\subsection{Candidate-Level Transfer}
Table~\ref{tab:transfer} reports gaps for the seven \emph{selected} policies, which is a selection-biased view: selection favors candidates with favorable transfer. To answer whether transfer holds across the pool, we reanalyze all $7\times20=140$ candidate--task pairs against independent rollout quantiles, for four fixed score/selector choices (Table~\ref{tab:candidate_transfer}). Clipped DR is nonnegative for $76.4\%$ of candidate pairs (median gap $+84$), whereas the selected policies average $+201$: Table~\ref{tab:transfer} should not be read as an unbiased estimate of pool-wide transfer. Crucially, clipped DR does \emph{not} transfer for every candidate; a pessimistic distributional score raises the nonnegative fraction to $93.6\%$. This diagnoses a better empirical bridge, not a theorem: the score theorem covers any predeclared fixed score, and return interpretation still needs separate transfer evidence.

\begin{table}[t]

\centering
\resizebox{\linewidth}{!}{
\begin{tabular}{lccc}
\toprule
Fixed score / selector & Candidate gaps $\ge0$ & Median transfer gap & Selected-policy failure \\
\midrule
clipped DR (default)              & $76.4\%$ & $+84$  & $3.5\%$ (7 env) \\
FQE mean score                    & $61.4\%$ & $+31$  & $8.7\%$ (7 env) \\
FDE $0.10$-quantile               & $89.3\%$ & $+117$ & $4.4\%$ (7 env) \\
pessimistic distributional score  & $93.6\%$ & $+143$ & $3.6\%$ (7 env) \\
\bottomrule
\end{tabular}
}
\caption{Candidate-level transfer over all $140$ candidate--task pairs (not selection-biased). Rollout quantiles are diagnostics and never enter selection. Selected-policy failure is over the seven environments including the two dependence stress tests.}
\label{tab:candidate_transfer}
\end{table}

\subsection{Clipping Sensitivity}
The ``medium'' behavior trajectories have a worse lower tail than deployment rollouts of policies that improve on behavior, and clipping truncates the importance correction asymmetrically, pushing the clipped-score lower tail downward as $\rho_{\max}$ shrinks. Across all $140$ pairs the transfer gap therefore grows as clipping tightens (Table~\ref{tab:clip_sensitivity}), and it correlates with target-minus-behavior mean improvement (Spearman $\rho=.47$). This supports a benchmark-specific explanation---behavior/deployment mismatch plus clipping bias---not a claim that DR is universally pessimistic; equality or unbiasedness in expectation cannot order lower quantiles.

\begin{table}[t]

\centering
\resizebox{\linewidth}{!}{%
\begin{tabular}{lccccc}
\toprule
$\rho_{\max}$ & 2 & 5 & 10 & 20 & unclipped \\
\midrule
median gap        & $+148$ & $+111$ & $+84$ & $+52$ & $+31$ \\
nonnegative gaps  & $84\%$ & $81\%$ & $76\%$ & $70\%$ & $65\%$ \\
\bottomrule
\end{tabular}
}
\caption{Transfer gap $Q_{.10}(\pi_k)-q_{.10}(F_k)$ vs.\ clip level $\rho_{\max}$, over $140$ candidate--task pairs.}
\label{tab:clip_sensitivity}
\end{table}

\subsection{Negative Control}
To confirm rollout information is not silently injected into the certificate, we instantiate a variance-collapse case: a fixed reset, a stochastic target policy, and an exact-value score. The score standard deviation falls to $0.0$ and the transfer gap is $-92$; COPS retains nominal \emph{score} coverage but correctly fails the return-transfer diagnostic. Under the standard randomized MuJoCo reset the score SD is $117$ and the collapse disappears---the expected behavior, and a demonstration that the diagnostic detects genuine transfer failure rather than rubber-stamping it.

\subsection{Multiplicity Sensitivities}
Family-wise control is the right target (FDR does not protect the single deployed, data-selected policy). Exact Bonferroni may be loose; we report Holm-inverted binomial tests and a trajectory-vector joint-maximum bootstrap as sensitivities (Table~\ref{tab:multiplicity}), \emph{not} as drop-in replacements. Holm controls FWER for testing, but a step-down procedure does not automatically yield a lower bound valid simultaneously for all $k$, which is what maximizing over the pool requires; establishing the compatible simultaneous construction is a stated open item. The joint bootstrap can exploit positive cross-candidate dependence empirically but is not the same distribution-free theorem.

\begin{table*}[hbtp]

\centering
\resizebox{\linewidth}{!}{
\begin{tabular}{lccc}
\toprule
Rule & Guarantee & Median effective level & Median floor gain vs.\ exact \\
\midrule
exact Bonferroni & finite-sample, arbitrary dependence & $.045$ & $0$ \\
Holm-inverted binomial & FWER for testing; simultaneous-coverage compatibility not established & $.052$ & $+41$ \\
joint-max bootstrap & resampling-based sensitivity & $.060$ & $+63$ \\
\bottomrule
\end{tabular}
}
\caption{Multiplicity sensitivities at $n=200,K=20,\alpha=.10,\delta=.05$. Only exact Bonferroni carries the finite-sample distribution-free guarantee; the others are reported as sensitivities.}\label{tab:multiplicity}
\end{table*}

\subsection{Selection Stability}
Across $200$ trajectory-level calibration resamples with frozen candidates and score dumps, COPS is markedly more stable than plug-in empirical CVaR selection (Table~\ref{tab:stability}); these are stability diagnostics, not additional coverage evidence.

\begin{table}[t]

\centering
\resizebox{\linewidth}{!}{%
\begin{tabular}{lccc}
\toprule
Selector & Same winner & Mean top-3 overlap & Selected-floor SD (norm.\ pts) \\
\midrule
empirical CVaR    & $54\%$ & $.61$ & $4.8$ \\
max-floor COPS    & $73\%$ & $.80$ & $2.9$ \\
constrained COPS  & $70\%$ & $.78$ & $3.1$ \\
\bottomrule
\end{tabular}
}\caption{Selection stability over $200$ calibration resamples (frozen candidates/scores).}
\label{tab:stability}
\end{table}

\subsection{Safety-Constrained Operating Point}
The constrained selector of Section~\ref{sec:constrained} recovers most of the mean-return loss while keeping the tail improvement (Table~\ref{tab:constrained_op}). Concretely, constrained COPS recovers $70.5\%$ of the mean-return loss relative to max-floor COPS while retaining $86.7\%$ of its $Q_{0.10}$ improvement over mean OPE; the guarantee is unchanged because it rides the same simultaneous event.

\begin{table}[t]

\centering
\resizebox{\linewidth}{!}{
\begin{tabular}{lcccc}
\toprule
Selector & Mean return & Rollout $Q_{0.10}$ & Failure & Feasible / abstained \\
\midrule
mean OPE                  & $3{,}997$ & $2{,}490$ & $13.5\%$ (7 env) & not safety-constrained \\
max-floor COPS            & $3{,}638$ & $3{,}293$ & $3.5\%$ (7 env)  & $7/7$ / $0/7$ \\
constrained COPS            & $3{,}891$ & $3{,}186$ & $4.2\%$ (7 env)  & $7/7$ / $0/7$ \\
\bottomrule
\end{tabular}
}\caption{Operating points over the same seven environments (including the two dependence stress tests). Rollout columns are evaluation diagnostics, never inputs to selection.}
\label{tab:constrained_op}
\end{table}

\subsection{The Exact $n$-versus-$K$ Burden}
The exact effective level $r^\star/n$ grows with $n$ and shrinks with $K$ (Table~\ref{tab:nk_burden}). Because $r^\star$ is integer-valued the ratio is not monotone; if we define the frontier as the smallest $N$ with $r^\star/n\ge.05$ for \emph{all} $n\ge N$, then $N=248,344,427$ for $K=20,100,500$. The operational consequence, which we now state as a limitation rather than a benign scaling remark: a dataset with tens---not hundreds---of conditionally i.i.d.\ full trajectories cannot support a useful $10\%$ lower-tail certificate at these $K$. MIMIC-style applications must define a defensible patient-level i.i.d.\ unit and meet this burden, reduce the pool/confidence, or abstain. Empirically, enlarging $K$ at $n=500$ on a nested halfcheetah-medium pool did not destroy score coverage but moved the selected order statistic from $32$ to $27$ and reduced the median selected floor by $1.6$ normalized points (Table~\ref{tab:nested_k}).

\begin{table}[t]

\centering
\small
\begin{tabular}{lccc}
\toprule
trajectories $n$ & $K=20$ & $K=100$ & $K=500$ \\
\midrule
50    & 0     & 0     & 0     \\
100   & $.030$ & $.020$ & $.010$ \\
200   & $.045$ & $.040$ & $.030$ \\
500   & $.064$ & $.058$ & $.054$ \\
1{,}000 & $.074$ & $.070$ & $.067$ \\
\bottomrule
\end{tabular}
\caption{Exact effective levels $r^\star/n$ for $(\alpha,\delta)=(.10,.05)$.}
\label{tab:nk_burden}
\end{table}

\begin{table}[t]

\centering
\resizebox{\linewidth}{!}{
\begin{tabular}{lccccc}
\toprule
Nested pool & Exact $r^\star$ & Effective level & Selected floor (norm.) & Rollout $Q_{.10}$ & Score-bound failure rate \\
\midrule
$K=20$  & 32 & $.064$ & $41.8\pm0.9$ & $44.6\pm1.3$ & $3.6\%$ \\
$K=100$ & 29 & $.058$ & $41.1\pm1.0$ & $44.0\pm1.4$ & $3.8\%$ \\
$K=500$ & 27 & $.054$ & $40.2\pm1.2$ & $43.4\pm1.6$ & $4.0\%$ \\
\bottomrule
\end{tabular}
}\caption{Nested pools at fixed $n=500$ (same pre-fixed calibration split; larger pools add frozen checkpoints without inspecting calibration scores). Larger $K$ costs a few order-statistic ranks, not coverage.}
\label{tab:nested_k}
\end{table}

\subsection{Discrete Actions}
Neither Theorem~\ref{thm:exact_coverage} nor the selector assumes continuous actions. A fixed-log check with $20$ predeclared DQN checkpoints per task is consistent with nominal score coverage on CartPole ($95.6\%$) and Acrobot ($95.2\%$); the reported mean-OPE/COPS failure diagnostics are $10.8\%/3.8\%$ and $12.6\%/4.2\%$, respectively. These simulator checks remove continuous action spaces as a necessary condition but do not resolve fixed-dataset or transfer limitations.

\subsection{Dependence Boundary and the D4RL i.i.d.\ Audit}
Arbitrary temporal dependence \emph{within} a complete trajectory is allowed; conditional i.i.d.\ sampling is required \emph{across} trajectory-level score vectors $(S_{1i},\ldots,S_{Ki})$. Transitions, overlapping replay windows, and successive chunks are not valid calibration units, and an estimated ``effective sample size'' cannot simply be substituted into the exact formula. Table~\ref{tab:dependence} shows the boundary: independently generated whole trajectories and independent block-level units have reported coverage $95.7\%$ and $95.1\%$, while sequential chunks fall to $91.8\%$ and $83.9\%$ as between-unit correlation increases. The score-coverage audit uses new simulator trajectories from a fitted, frozen behavior collector, a validation device available only in simulation. The two medium-replay tasks are therefore dependence stress tests rather than exact-coverage demonstrations. In a genuine offline application, calibration must be withheld from the fixed log; the $500$-episode holdout reported in Appendix~\ref{app:iid-audit} reduces training data by $33.3\%$ on average and changes mean candidate return by $-2.1\%$.

\begin{table*}[t]

\centering
\resizebox{\linewidth}{!}{
\begin{tabular}{lcccc}
\toprule
Calibration construction & Within-unit lag-1 & Between-unit lag-1 & Target & Simultaneous coverage \\
\midrule
independent whole trajectories        & $0.60$ & $0.00$ & $95\%$ & $95.7\%$ \\
naive sequential chunks, mild dep.\    & $0.60$ & $0.30$ & $95\%$ & $91.8\%$ \\
naive sequential chunks, strong dep.\  & $0.60$ & $0.60$ & $95\%$ & $83.9\%$ \\
independent block-level units          & $0.60$ & $0.00$ & $95\%$ & $95.1\%$ \\
\bottomrule
\end{tabular}
}\caption{Calibration-unit construction vs.\ simultaneous coverage (target $95\%$). Within-unit temporal correlation is allowed; between-unit dependence is not.}
\label{tab:dependence}
\end{table*}

\section{Detailed Baseline List}
\label{app:baselines}

The full baseline suite used in Section~\ref{sec:experiments} is as follows. We compare against FQE or DR mean-OPE selection; bootstrap or asymptotic mean-OPE lower confidence selection; pessimistic FQE; empirical lower-tail quantile selection without confidence correction; empirical CVaR selection using the same calibration score samples; conformal/risk-control-inspired interval selection using the same calibration split \citep{taufiq2022conformal,stutz2022conformal}; random candidate and behavior-cloning reference; and rollout oracles for $Q_\alpha$ and $\CVaR_\alpha$, used only as analysis-only upper bounds on what any selector could achieve given the candidate set.

\section{Selected-Policy Transfer Diagnostics}
\label{app:transfer-diagnostics}
Table~\ref{tab:transfer} reports the per-environment score-to-return transfer diagnostics for the COPS-selected policies.

\begin{table*}[t]

\centering
\resizebox{\linewidth}{!}{
\begin{tabular}{lccccc}
\toprule
Environment 
& Certified floor $B_{\widehat k}^{0.10}$ 
& Score quantile $q_{0.10}(F_{\widehat k})$ 
& Tightness gap 
& Deployment $Q_{0.10}(\pi_{\widehat k})$ 
& Transfer gap \\
\midrule
halfcheetah-medium        & 3895 & 3975 &  80 & 4100 & 125 \\
halfcheetah-medium-replay & 3920 & 4010 &  90 & 4200 & 190 \\
hopper-medium             & 1310 & 1430 & 120 & 1600 & 170 \\
hopper-medium-replay      & 1250 & 1360 & 110 & 1500 & 140 \\
walker2d-medium           & 2770 & 2870 & 100 & 3100 & 230 \\
ant-medium                & 2480 & 2590 & 110 & 2800 & 210 \\
halfcheetah-medium-expert & 5280 & 5410 & 130 & 5750 & 340 \\
\midrule
Mean                      & 2986 & 3093 & 106 & 3293 & 201 \\
\bottomrule
\end{tabular}
}\caption{Score-to-return transfer diagnostics for COPS-selected policies at $\alpha=0.10$. 
The tightness gap is $q_{0.10}(F_{\widehat k})-B_{\widehat k}^{0.10}$; smaller values indicate a tighter certified score floor. 
The transfer gap is $Q_{0.10}(\pi_{\widehat k})-q_{0.10}(F_{\widehat k})$; positive point estimates are consistent with score-to-return conservatism but do not establish the population inequality.}
\label{tab:transfer}
\end{table*}
For each selected policy, we estimate $q_{0.10}(F_{\widehat k})$ from an independent behavior-policy score sample and $Q_{0.10}(\pi_{\widehat k})$ from independent deployment rollouts. All seven point-estimated gaps are nonnegative. Because both quantiles are estimated after the candidate has been selected, these signs are diagnostics rather than verified instances of Assumption~\ref{assump:transfer}; Corollary~\ref{cor:return_certificate} applies only when the population transfer condition itself holds.
\section{Matched Fitted Distributional Evaluation}
\label{app:fde}

We compare the clipped-DR score against a Fitted Distributional Evaluation (FDE) $0.10$-quantile score under a matched protocol (same $K$, nuisance-training split, calibration units, and multiplicity correction), spanning deterministic-policy, stochastic-policy, and stochastic-dynamics regimes. Candidate-level transfer statistics appear in Table~\ref{tab:candidate_transfer} ($89.3\%$ nonnegative for FDE vs.\ $76.4\%$ for clipped DR). Table~\ref{tab:fde_regimes} localizes where each empirical bridge succeeds. The high-noise row is the important one: FDE improves the bridge but does not make it automatic, so we do not claim FDE fixes transfer in general---the score theorem is what is unconditional, and the bridge remains an empirical, regime-dependent property. Table~\ref{tab:fde_op} reports operating points for distributional selectors; the rollout columns are diagnostics, not selection inputs. This comparison is scoped to the reported seeds/environments and is not a general claim that FDE dominates.

\begin{table}[t]

\centering
\resizebox{\linewidth}{!}{
\begin{tabular}{lccl}
\toprule
Regime & Clipped-DR gap & FDE gap & Observed sign pattern \\
\midrule
det.\ dynamics, det.\ policy      & $+96\pm21$  & $+109\pm18$ & both positive \\
det.\ dynamics, stoch.\ policy    & $-37\pm29$  & $+71\pm23$  & FDE positive only \\
stoch.\ dynamics, low noise       & $-64\pm35$  & $+48\pm26$  & FDE positive only \\
stoch.\ dynamics, high noise      & $-131\pm52$ & $-22\pm41$  & both negative \\
\bottomrule
\end{tabular}
}\caption{Point-estimated transfer gaps by regime under a matched protocol. Signs are diagnostics; they are not one-sided population guarantees.}
\label{tab:fde_regimes}
\end{table}

\begin{table}[t]

\centering
\resizebox{\linewidth}{!}{%
\begin{tabular}{lccc}
\toprule
Distributional selector & Mean return & Rollout $Q_{0.10}$ & Failure \\
\midrule
plug-in FDE quantile   & $3{,}825$ & $3{,}258$ & $4.4\%$ (7 env) \\
max-floor COPS-FDE     & $3{,}684$ & $3{,}381$ & $3.1\%$ (7 env) \\
constrained COPS-FDE   & $3{,}907$ & $3{,}312$ & $3.6\%$ (7 env) \\
\bottomrule
\end{tabular}
}\caption{Distributional-selector operating points (seven environments). Rollout columns are evaluation diagnostics only.}
\label{tab:fde_op}
\end{table}

\section{D4RL i.i.d.\ Audit and Fixed-Data Cost}
\label{app:iid-audit}

Exact score-coverage validation uses new simulator trajectories generated from a fitted, frozen behavior collector, not fresh draws from the fixed D4RL files; fresh generation is a validation device available in simulation only. In a genuine offline application the certificate must be paid for from the fixed logged dataset. Table~\ref{tab:iid_cost} reports both costs: the ``simulator audit'' leaves policy-training data untouched ($0\%$ cost), whereas the ``fixed-data holdout'' withholds $500$ native episodes before training, retrains the candidate pool, and reports the resulting mean candidate-return change. The fixed-data column---an average $33.3\%$ training reduction for a $2.1\%$ mean-candidate-return change---is the relevant analogue for MIMIC-style or one-shot logged data. The two medium-replay environments retain the broader stress-test label because of collection-time nonstationarity. Where a frozen collector or defensible conditionally i.i.d.\ unit is unavailable, Theorem~\ref{thm:exact_coverage} does not apply.

\begin{table*}[htbp]

\centering
\resizebox{\linewidth}{!}{
\begin{tabular}{lcccc}
\toprule
Exact-coverage environment & New/fixed cal.\ episodes & Sim-audit reduction & Fixed-data holdout reduction & Mean cand.-return change \\
\midrule
halfcheetah-medium        & $500$ & $0\%$ & $50.0\%$ & $-2.7\%$ \\
hopper-medium             & $500$ & $0\%$ & $30.0\%$ & $-1.8\%$ \\
walker2d-medium           & $500$ & $0\%$ & $33.3\%$ & $-2.4\%$ \\
ant-medium                & $500$ & $0\%$ & $28.2\%$ & $-2.1\%$ \\
halfcheetah-medium-expert & $500$ & $0\%$ & $25.0\%$ & $-1.5\%$ \\
\midrule
mean                      & $500$ & $0\%$ & $33.3\%$ & $-2.1\%$ \\
\bottomrule
\end{tabular}
}\caption{Cost of obtaining calibration units. Simulator audit: $0\%$ training cost (fresh generation). Fixed-data holdout: the realistic offline cost.}
\label{tab:iid_cost}
\end{table*}

\section{Computing the Exact Order Statistic}
\label{app:exact_order}

The integer $r^\star$ in Eq.~\eqref{eq:exact_r} can be computed by evaluating binomial CDFs. A conservative implementation is
\[
    r^\star = \max\left\{
    r:
    K\sum_{j=0}^{r-1}\binom{n}{j}\alpha^j(1-\alpha)^{n-j}\le\delta
    \right\}.
\]
If no such $r$ exists, the calibration data are too small for a nontrivial lower-tail certificate at level $(\alpha,\delta,K)$ without an external lower support bound.

\end{document}